\newcolumntype{?}{!{\vrule width 1pt}}
\newcolumntype{C}[1]{>{\centering\arraybackslash\hspace{0pt}}p{#1}}
\newcounter{nbdrafts}
\newcommand{\checknbdrafts}{
\ifnum \thenbdrafts > 0
\@latex@warning@no@line{**********************************************************************}
\@latex@warning@no@line{* The document contains \thenbdrafts \space draft note(s)}
\@latex@warning@no@line{**********************************************************************}
\fi}
\newcommand{\comment}[1]{}
\newcommand{\OURSR}[0]{{\bf OURS-RANDOM}}
\newcommand{\OURSM}[0]{{\bf OURS-MAXW}}
\newcommand{\BASEM}[0]{{\bf BASELINE-MAXW}}
\newcommand{\bX}[0]{\mathbf{X}}
\newcommand{\bx}[0]{\mathbf{x}}
\newcommand{\bI}[0]{\mathbf{I}}
\newcommand{\mC}[0]{\mathcal{C}}
\newcommand{\mE}[0]{\mathcal{E}}
\newcommand{\mH}[0]{\mathcal{H}}
\newcommand{\mL}[0]{\mathcal{L}}
\newcommand{\mX}[0]{\mathcal{X}}
\newcommand{\mO}[0]{\mathcal{O}}
\newcommand{\mN}[0]{\mathcal{N}}
\newcommand{\KL}[0]{\text{KL}}
\newif\ifdraft
 \newcommand{\PF}[1]{\textcolor{blue}{{\bf PF: #1}}}
 \newcommand{\PB}[1]{\textcolor{red}{{\bf PB: #1}}}
 \newcommand{\pb}[1]{\textcolor{red}{#1}}
 \newcommand{\FF}[1]{\textcolor{red}{{\bf FF: #1}}}
 \newcommand{\PF}[1]{}
 \newcommand{\FF}[1]{}
 \newcommand{\PB}[1]{}
 \newcommand{\pb}[1]{ #1 }
\newcommand{\sttt}[1]{{\footnotesize{\texttt{#1}}}}
\newtheorem{theorem}{Theorem}[section]
\newtheorem{lemma}[theorem]{Lemma}
\newenvironment{proof}[1][Proof]{\begin{trivlist}
\item[\hskip \labelsep {\bfseries #1}]}{\end{trivlist}}
\begin{document}

\title{Multi-Modal Mean-Fields via Cardinality-Based Clamping}

\author{Pierre Baqu{\'{e}}$^1$ \quad\quad Fran{\c{c}}ois Fleuret$^{1,2}$ \quad\quad Pascal Fua$^1$ \\
$^1$CVLab, EPFL, Lausanne, Switzerland\\
$^2$IDIAP, Martigny, Switzerland\\
{\tt\small \{firstname.lastname\}@epfl.ch}
}

\maketitle

\begin{abstract}

  Mean Field inference is central to statistical physics.  It has attracted much
  interest  in  the Computer  Vision  community  to efficiently  solve  problems
  expressible in terms of large Conditional  Random Fields.  However,
  since it models the posterior probability distribution as a product of marginal probabilities,
  it may fail to properly account for important dependencies between variables.

  We therefore replace the fully  factorized distribution of  Mean Field
  by  a weighted  mixture of  such distributions,  that similarly  minimizes the
  KL-Divergence to  the true posterior.   By introducing two new  ideas, namely,
  conditioning  on groups  of  variables  instead of  single  ones  and using  a
  parameter of  the conditional random field  potentials, that we identify  to the
  temperature in the sense of statistical  physics to select such groups, we can
  perform this minimization  efficiently.  Our extension of the clamping method
  proposed  in previous  works  allows us  to both  produce  a more  descriptive
  approximation of the true posterior and, inspired by the diverse MAP paradigms,
  fit a mixture of Mean Field approximations.  We demonstrate that this
  positively impacts real-world algorithms that initially relied on mean fields.

\end{abstract}

\vspace{-0.2cm}

\section{Introduction}

Mean Field (MF)  is a   modeling technique  that has been  central to
statistical  physics for  a  century.  Its ability  to handle  stochastic
models  involving millions  of variables  and dense
graphs has attracted much attention in our community.  It is routinely
used for tasks as diverse as detection~\cite{Fleuret08a,Bagautdinov15}, segmentation~\cite{Saito12,Kraehenbuehl13,Chen15b,Zheng15}, denoising~\cite{Cho00,Nowozin11,Li14},
depth from stereo~\cite{Fransens06a,Kraehenbuehl13} and pose-estimation~\cite{Vineet13}.

MF approximates  a ``true'' probability  distribution by a  fully-factorized one
that is easy to encode and manipulate~\cite{Koller09}.  The true distribution is usually defined
in  practice  through  a  Conditional  Random   Field  (CRF),  and  may  not  be
representable  explicitly, as  it  involves  complex inter-dependencies  between
variables. In such a case the MF approximation is an extremely useful tool.

While  this drastic  approximation often  conveys the  information of  interest\comment{,
usually the  marginal distributions}, the  true distribution may concentrate on
configurations  that are  very different,  equally  likely, and  that cannot  be jointly
encoded by a product law.  Section~\ref{sec:motivation} depicts such a case
where groups of variables are correlated and may take one among many values with
equal  probability.   In   this  situation,  MF  will  simply   pick  one  valid 
configuration, which we call a mode,  and  ignore   the  others.    So-called  structured   Mean Field
methods~\cite{Saul95,Bouchard-Cote09} can  help overcome this  limitation.  This
can be  effective but requires arbitrary  choices in the design  of a simplified
sub-graph  for each  new problem,  which can  be impractical  especially if  the
initial CRF is very densely connected.

Here  we  introduce  a novel  way  to  automatically  add  structure to  the  MF
approximation and  show how it can  be used to return  several potentially valid
answers  in  ambiguous  situations.   Instead  of  relying  on  a  single  fully
factorized   probability  distribution,   we   introduce  a   mixture  of   such
distributions, which  we will refer to  as {\it Multi-Modal Mean  Field} (MMMF).

We compute  this MMMF by  partitioning the state space  into subsets in  which a
standard MF  approximation suffices. This is  similar in spirit to  the approach
of~\cite{Weller15} but a key difference is that our clamping acts simultaneously
on arbitrarily sized groups of variables, as  opposed to one at a time. We will
show that  when dealing  with large CRFs  with strong  correlations,
this is essential. The key to the efficiency of MMMF is how we choose
these groups.  To this end, we  introduce a temperature parameter  that controls
how much we  smooth the original probability distribution  before the
MF  approximation. By  doing  so for  several temperatures,  we  spot 
groups of variables that may take different labels in different modes of the distribution.
We then force  the optimizer to explore alternative solutions  by clamping them,
that is, forcing them to take different values.  Our temperature-based
approach, unlike the one of~\cite{Weller15}, does not require  {\it a
  priori} knowledge of the CRF structure and is therefore compatible with ``black box'' models.

In  the remainder  of the  paper, we  will  describe both  MF and  MMMF in  more
details. We will then demonstrate that MMMF outperforms both MF and the clamping
method of~\cite{Weller15} on a range of tasks. 

\comment{ We will then show that it compares favorably to the one
of~\cite{Weller15}, on  the same  benchmark problems.  Finally, we  will replace
standard      MF      by      MMMF      in      a      pedestrian      detection
algorithm~\cite{Fleuret08a,Berclaz11}        and         two        segmentation
algorithms~\cite{Chen15b,Yu16b} that  rely on  dense CRFs~\cite{Kraehenbuehl13}.
In all  threes cases, we  will demonstrate that  MMMF allows better  handling of
ambiguous  situations  in  individual  frames and  to  leverage  temporal
consistency across frames to resolve ambiguities.
}

\comment{
We  will refer  to our  approach  as {\it  Multi-Modal Mean  Field} (MMMF).   It
combines the following contributions:
\vspace{-0.2cm}
\begin{itemize}
\setlength\itemsep{0em}

\item  We propose  {\it cardinality  clamping},  which makes  the clamping  idea
  of~\cite{Weller15}  efficient  for  real-size CRFs  with  strong  dependencies
  between variables.

\item We propose  to vary the temperature  parameter in a CRF in  order to chose
  the  best  variables to  clamp.  In  particular,  we  show that  the  critical
  temperature for  the dense Gaussian CRF~\cite{Kraehenbuehl13}  can be computed
  in closed form.
  
\item We extend the notion of diverse MAPs to a mixture on Mean Fields.
\end{itemize}
}



\section{Background and Related Work}

Conditional Random  Fields (CRFs)  are    often    used   to    represent   correlations    between
variables~\cite{Wang13c}. Mean Field inference is a means to approximate them
in a computationally efficient way. We briefly review both techniques below.

\subsection{Conditional Random Fields}
\label{sec:relatedCRF}

Let  $\bX  =  \left(X_1,  \ldots, X_N\right)$  represent  hidden  variables  and
$\bI$ an image  evidence.  A CRF  relates   the  ones  to the  others  via  a
posterior probability distribution
\comment{
\begin{align}
\vspace{-0.1cm}
P(\bX \mid \bI) &= \exp\left(-\mE(\bX \mid \bI) - \log(Z
  (\bI)) \right)  \label{eq:related:mrf}  \\
  & = \exp\left(\sum_{ c \in \mC}\bm{\phi}_c(\bX_c \mid \bI) - \log(Z
(\bI)) \right) \; \; , \nonumber
\end{align}
}
\begin{align}
\vspace{-0.2cm}
P(\bX \mid \bI) &= \exp\left(-\mE(\bX \mid \bI) - \log(Z
  (\bI)) \right)  \label{eq:related:mrf}  \;, 
\vspace{-0.1cm}
\end{align}
where $\mE(\bX \mid \bI)$ is an energy  function that is the sum of 
terms known as potentials $\bm{\phi}_c(\cdot)$ defined on a set of graph cliques
$c \in  \mC$, $\log(Z(\bI))$ is  the log-partition function that  normalizes the
distribution.
From now on, we  will omit the dependency with respect to  $\bI$.

\subsection{Mean Field Inference}
\label{sec:relatedMF}

The set  of all  possible configurations  of $\bX$, that we denote by $\mX$, is exponentially large, 
which makes the explicit  computation of marginals,
Maximum-A-Posteriori (MAP)  \pb{or $Z$} intractable and a  wide range of
variational methods have been proposed to approximate $P(\bX)$~\cite{Kappes15}.
Among those, Mean Field (MF) inference is one of the most popular. It involves
introducing a distribution $Q$ written as
\begin{equation}
\vspace{-0.2cm}
Q(\bX=(x_1,\ldots,x_N)) = \prod_{i = 1}^{N} q_i(x_i)\; ,
\vspace{-0.1cm}
\label{eq:fullyFactorized}
\end{equation}
where $q_i( \, .  \, )$ is a categorical discrete distribution defined
for $x_i$ in  a possible labels space $\mL$. The $q_i$ are estimated  by minimizing the
KL-divergence
\begin{equation}
\KL(Q || P) = \sum_{\bx \in \mX} Q(\bX = \bx ) \log \frac{Q(\bX = \bx )} {P(\bX = \bx )}\; .
\label{eq:related:kl}
\end{equation}
Since  $Q$  is fully  factorized,  the  terms  of  the $\KL$-divergence  can  be
recombined as a sum of an expected energy, containing as many terms as there are
potentials  and a  convex negative  entropy  containing one  term per  variable.
Optimization can then be performed  using a provably convergent gradient-descent
scheme~\cite{Baque16}.  \comment{ Sometimes, the distribution  itself is the desired final
result.   In others,  it  is used  to  compute a  MAP  assignment.}

As  will  be  shown  in  Section~\ref{sec:motivation},  this  simplification
  sometimes comes  at the cost  of downplaying  the dependencies  between variables.
  The {\it  DivMBest} method~\cite{Ramakrishna12b,Batra13} addresses  this issue
  starting from the  following observation: When looking for an  assignment in a
  graphical model,  the resulting MAP  is not  necessarily the best  because the
  probabilistic  model may  not capture  all that  is known  about the  problem.
  Furthermore, optimizers can get stuck  in local minima.  The proposed solution
  is to  sequentially find several local  optima and force them  to be different
  from  each  other  by  introducing  diversity  constraints  in  the  objective
  function.
It has
recently  been shown  that it  is
provably  more  effective to  solve for diverse MAPs  jointly but  under  the  same set  of
constraints~\cite{Kirillov15}. However, none of these methods provide a generic
and  practical way  to choose  local constraints  to be  enforced over  variable
sub-groups.  Furthermore, they only return a  set of MAPs.  By contrast, our
  approach  yields a  multi-modal approximation  of the  posterior distribution,
  which is a much richer description and which we will show to be useful.


\comment{
Another  approach to  improving  the  MF approximation,  is  to  clamp a  binary
variable $\bx_i$  to either  0 or  1 and to  approximate the  partition function
twice,  once   for  each   value
}

Another approach to improving the MF approximation is to decompose it into a
mixture of product laws by ``clamping'' some of the variables to fixed values, and
finding for each set of values the best factorized distribution under the resulting 
deterministic conditioning.   By  summing   the  resulting
approximations  of  the  partition  function,   one  can  provably  improve  the
approximation  of the  true  partition  function~\cite{Weller15}.  This  procedure  can then  be
repeated iteratively by  clamping successive variables but is only practical
for relatively small CRFs.  At  each iteration, the
variable to be  clamped is chosen on  the basis of the  graphical model weights,
which  requires intimate  knowledge about  its  internals, which  is not  always
available.

Our own  approach is in the  same spirit but  can clamp multiple variables  at a
time   without   requiring   any   knowledge   of   the   graph   structure   or
weights.

Finally, {\it  DivMBest} approaches do not provide a way to choose the best solution
without looking at the ground-truth, except  for the one  of~\cite{Yadollahpour13}
that  relies on training  a new classifier for that purpose.
By contrast, we  will  show that the multi modal Bayesian
nature of our output induces a principled  way  to  use temporal
consistency to solve directly practical problems.



\section{Motivation}
\label{sec:motivation}

To  motivate our  approach, we  present here  a toy  example that  illustrates a
typical failure  mode of  the standard  MF technique, which  ours is  designed to
prevent. Fig.~\ref{fig:schematic_approach} depicts  a CRF where each  pixel represents 
a binary variable connected to its neighbors by attractive pairwise potentials.

For the sake of illustration, we split  the grid into four zones as follows. The
attractive terms are  weak on left side  but strong on the  right. Similarly, in
the top part, the unary terms favor value of $1$ while being completely random in
the bottom part.

The    unary    potentials    are    depicted     at    the    top    left    of
Fig.~\ref{fig:schematic_approach} and  the result of the standard  MF approximation
at the bottom in terms of the probability of the pixels being assigned the label
$1$. In the bottom right corner of the grid,  because the interaction potentials are strong,
all pixels end up being assigned high probabilities of being 1 by MF, where they could
just as well  have all been assigned  high probabilities to be  zero. We explain
below how  our MMMF algorithm  can produce {\it  two} equally likely  modes, one
with all pixels being zero with high probability and the other with all pixel being one with high probability.


\begin{figure}[ht!]
\vspace{-0.3cm}
\begin{center}
\includegraphics[clip, width=0.8\textwidth]{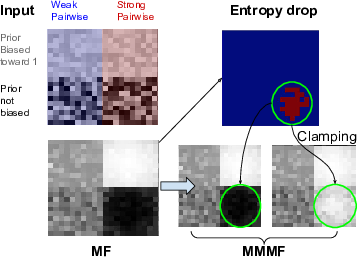} \\
\end{center}
\vspace{-0.3cm}
   \caption{A typical failure mode of MF resolved by MMMF. Grey levels indicate marginal probabilities, under the prior (Input) and under the product laws (MF and MMMF).} 
\label{fig:schematic_approach}
\vspace{-0.3cm}
\end{figure}
\vspace{-0.2cm}


\section{Multi-Modal Mean Fields}
\label{sec:MMMF}

Given  a CRF  defined with  respect  to a  graphical model  and the  probability
$P(\bX= \bx)$   for  all   states   in   $\mX$,  the   state   space  introduced   in
Section~\ref{sec:relatedCRF}, the standard MF approximation only models a single
mode  of the  $P$,  as discussed  in  Section~\ref{sec:relatedMF}. We  therefore
propose to create  a richer representation that accounts  for potential multiple
modes     by    replacing     the    fully     factorized    distribution     of
Eq.~\ref{eq:fullyFactorized} by  a weighted  mixture of such  distributions that
better minimizes the KL-divergence to $P$.

The  potential  roadblock  is  the  increased  difficulty  of  the  minimization
problem. In this section,  we present an overview of our  approach to solving it,
and discuss its key aspects in the following two.


Formally, let  us assume that  we have  partitioned $\mX$ into  disjoint subsets
$\mX_k$ for $1 \leq k \leq K$.  We replace the original Mean Field (MF) approximation
by one of the form
\begin{eqnarray}
P(\bX = \bx) \ \approx \ Q_{MM}(\bX= \bx) & \!\!\!\!=\!\!\!\! & \sum_k m_k Q_k(\bx) \;,
\label{eq:MultMeanField}
 \\
Q_k(\bx)                                  & \!\!\!\!=\!\!\!\! & \prod_i{q_i^k(x_i)}\;, \nonumber
\end{eqnarray}
where $Q_k$ is  a MF approximation  for the states $\bx  \in \mX_k$ with
individual probabilities $q_i^k$ that variable $i$ can take value $x_i$ in a set
of labels $\mL$, and  $m_k$ is the probability that a  state belongs to $\mX_k$.

 We can evaluate  the $m_k$ and
$q_i^k$ values by minimizing the KL-divergence between $Q_{MM}$ and $P$.  The key to
making this  computation tractable is  to guarantee  that we can  evaluate the $q_i^k$
parameters on each  subset separately by performing a  standard MF approximation
for each.   One way  to achieve  that is to  constrain the  support of  the $Q_k$
distributions to be disjoint, that is,
\begin{eqnarray}
\forall k \neq k^{\prime}, Q_{k^{\prime}}\left(\mX_k \right) & = & 0 \;.
\label{eq:MultMeanFieldConstraint}
\end{eqnarray}
In other words,  each MF approximation is {\it  specialized} on a subset
$ \mX_k$ of the state space and is computed to minimize the KL-Divergence there.
In practice, we enrich our approximation by recursively splitting a set of
states $\mX_k$ among our partition  $\mX_1,\dots,\mX_{K}$ 
into  two subsets $\mX^1_k$ and  $\mX^2_k$ to obtain the 
new partition  $\mX_1,\dots,\mX_{k-1},\mX^1_k,\mX^2_k,\mX_{k+1},\dots,\mX_{K}$,
 which is then reindexed from $1$ to $K+1$. Initially, $\mX_k$
represents  the whole  state space.  Then we  take it  to be  the newly  created
subset in a  breadth-first order  until a  preset number  of subsets  has been
reached. Each time,
the algorithm proceeds through the following steps:
\begin{list}{\labelitemi}{\leftmargin=1.25em}
\itemsep0em

 \item It finds groups of variables  likely to have different values in different
   modes of the distribution using an entropy-based criterion for the $q_i^k$.

 \item It  partitions the set  into two disjoint  subsets according to a clause
   that sets a threshold on the number of variables in this group that take a specific  label.
    $\mX^1_k$ will contain the states among $\mX_k$ that meet this clause and
    $\mX^2_k$ the others.

 \item  It performs  an MF  approximation  within each  subset independently  to
   compute  parameters $q_i^{k,1}$  and  $q_i^{k,2}$  for each  of
   them. This is done by a standard MF approximation, to which we add
   the disjointness constraint~\ref{eq:MultMeanFieldConstraint}.

\end{list}
This  yields a  binary tree  whose leaves  are the  $\mX_k$ subsets  forming the
desired   state-space   partition.  Given this partition, we can finally evaluate  the   $m_k$.    
In Section~\ref{sec:partioningStateSpace},  we  introduce   our  cardinality  based
criterion and show that it makes minimization of the KL-divergence possible.  In
Section~\ref{sec:clamp}, we  show how our entropy-based criterion selects, at each
iteration, the groups of variables on which the clauses depend.

\section{Partitioning the State Space}
\label{sec:partioningStateSpace}
In this section, we describe the cardinality-based criterion we use to
recursively split state spaces and  explain why it allows efficient optimization
of  the  KL-divergence   $\KL(Q_{MM}\|P)$,  where  $Q_{MM}$  is   the  mixture  of
Eq.~\ref{eq:MultMeanField}.
\subsection{Cardinality Based Clamping}
\label{sec:cardinalityClamping}
The state space partition  $\mX_{k \; , \; {1 \leq k  \leq K}}$ introduced above
is at the heart  of our approximation and its quality and tractability critically depend on how
well chosen it is.
In~\cite{Weller15}, each split  is obtained by clamping to zero  or one the value
of a single binary  variable.  In other words, given a set  of states $\mX_k$ to
be split, it is broken into subsets $\mX_k^1  = \{\bx \in \mX_k | x_i = 0\}$ and
$\mX_k^2 =  \{\bx \in  \mX_k | x_i  = 1\}$,  where $i$ is  the index  of a
specific variable. To compute a Mean Field approximation to $P$ on each of these
subspaces, one only  needs to perform a standard  Mean Field approximation while
constraining the $q_i$ probability assigned to the clamped variable to be either
zero  or  one.  \comment{Furthermore,  it  can  be  formally  proved that  this  approach
guarantees an improved approximation~\cite{Weller15}.}  However, this is limiting
for the large and dense CRFs used in practice because clamping only one variable
among  many at  a  time may  have  very little  influence  overall. Pushing  the
solution  towards  a  qualitatively  different minimum  that  corresponds  to  a
distinct mode may require simultaneously clamping many variables.

To remedy  this,
 we retain the  clamping idea but apply it to  groups of variables instead
of  individual ones  so as  to find  new modes  of the  posterior while  keeping the
estimation of the parameters $m_k$ and $q^k_i$ computationally tractable.
More specifically, given a  set of states $\mX_k$ to be split,  we will say that
the split into $\mX^1_k$ and $\mX^2_k$ is cardinality-based if
\begin{align}
 &  \mX^1_k  = \{\bx  \in  \mX_k  \text{  s.t. }  \sum  \limits_{u  = 1  \dots  L}
  \mathbbm{1}(\bx_{i_u} = v_u) \geq C \} \;, \label{eq:cardSplit1}\\
  & \mX^2_k = \{\bx \in \mX_k  \text{ s.t. }  \sum \limits_{u = 1  \dots L}
  \mathbbm{1}(\bx_{i_u} = v_u) < C \} \;, \label{eq:cardSplit2}
\end{align}
where the  $i_1,\dots,i_L$ denote groups  of variables that  are chosen
by the entropy-based criterion and $v_1,\dots,v_L$ is  a set of  labels in
$\mL$. In other words, in one of the  splits, more than $C$ of the variables have
the assigned values and in the other less than $C$ do. For example, for semantic
segmentation $\mX^1_k$  would be the set  of all segmentations in $\mX_k$ for  which at least
$C$ pixels  in a region  take a  given label, and  $\mX^2_k$ the set  of all
segmentations for which less than $C$ pixels do.

We will refer  to this approach as {\it cardinality  clamping}
  and will propose  a practical way  to select
  appropriate   $i_1,\dots,i_L$   and   $v_1,\dots,v_L$  for   each   split   in
  Section~\ref{sec:clamp}.

\subsection{Instantiating the Multi-Modal Approximation}
\label{sec:instatiatingMMMF}

The  {\it cardinality  clamping}  scheme introduced  above yields  a state  space
partition $\mX_{k \;  , \; {1 \leq k  \leq K}}$. We now show that  given such a
partition, minimizing the KL-divergence $\KL(Q_{MM} \| P)$ using the multi-modal
approximation  of  Eq.~\ref{eq:MultMeanField}  under  the  disjointness
constraint, becomes tractable.

In practice, we relax the constraint~\ref{eq:MultMeanFieldConstraint} to {\it near} disjointness
\begin{equation}
\forall k \neq k^{\prime}, Q_{k^\prime} \left(\mX_k \right) \leq \epsilon
\;,
\label{eq:MultMeanFieldConstraintEpsilon}
\end{equation}
where $\epsilon$  is a small  constant.  It makes the  optimization problem
better behaved and removes the need to tightly  constrain any individual
variable,  while retaining  the  ability  to compute  the  KL  divergence up  to
$\mathcal{O}(\epsilon       \log(\epsilon))$.

Let $\hat{m}$ and $\hat{q}$ stand for all the $m_k$ and $q_i^k$ parameters that
appear in Eq.~\ref{eq:MultMeanField}.  We compute them as
\begin{small}
  \begin{eqnarray}
\min \limits_{\hat{m},\hat{q}}\KL(Q_{MM} \| P)\hspace{-0.3cm} &=& \hspace{-0.2cm}\min \limits_{\hat{m},\hat{q}} \sum \limits_{\bx \in
  \mX} \sum \limits_{k \leq K}  m_k Q_{k}(\bx) \log \left( \dfrac{Q_{MM}(\bx)}{P(\bx)} \right) \nonumber \\
&\equiv& \hspace{-0.2cm} \min \limits_{\hat{m}}  \sum \limits_{k \leq K} m_k\log(m_k)
- \sum \limits_{k \leq K} m_k A_k \; ,\label{method:eq:A_k} \\
  \mbox{where \hspace{0.8cm}}  A_k & =  & \max  \limits_{q^k_i,{i = 1  \dots N}}
  \sum      \limits_{\bx      \in      \mX}     Q_{k}(\bx)      \log      \left(
  \dfrac{e^{-E(\bx)}}{Q_{k}(\bx)}\right) \; \label{method:eq:max_A_k}
  \end{eqnarray}
\end{small}
where $A_k$ is maximized under the near-disjointness constraint of Eq.~\ref{eq:MultMeanFieldConstraintEpsilon}.

%
%
As  proved  formally in  the  supplementary  material,  the second  equality  of
Eq.~\ref{method:eq:A_k} is valid up to a constant and after neglecting a term of
order $\mathcal{O}(\epsilon \log  \epsilon)$ which appears under  the {\it near}
disjointness   assumption  of   the  supports.    Given  the   $A_k$  terms   of
Eq.~\ref{method:eq:max_A_k}  and   under  the   constraints  that   the  mixture
probabilities $\hat{m}$ sum to one, we must have
\begin{equation}
\label{method:eq:mk}
m_k = \dfrac{e^{A_k}}{\sum \limits_{k^\prime \leq K} e^{A_{k^\prime}}} \;,
\end{equation}
and we  now turn to the  computation of these  $A_k$ terms.  We formulate  it in
terms of a constrained optimization problem as follows.

\subsubsection{Handling Two Modes}
\label{sec:twoModes}

\comment{For simplicity's  sake,} Let us first  consider the case where  we generate
only  two modes  modeled  by $Q_1(\bx)=\prod  q^1_{i}(x_i)$ and  $Q_2(\bx)=\prod
q^2_{i}(x_i)$ and  we seek to  estimate the $q^1_{i}$  probabilities.
The $q^2_{i}$ probabilities are evaluated similarly.

Recall from Section~\ref{sec:instatiatingMMMF} that  the  $q^1_{i}$  must
 be  such  that  the  $A_1$  term  of
Eq.~\ref{method:eq:max_A_k}  is  maximized  subject  to  the  near  disjointness
constraint of Eq.~\ref{eq:MultMeanFieldConstraintEpsilon}, which becomes
\begin{equation}
  \label{method:eq:constraints}
  Q_1 \left(\sum  \limits_{u  = 1  \dots  L} \mathbbm{1}(\bX_{i_u} = v_u) < C \right)\leq \epsilon \;,
\end{equation}
under our cardinality-based clamping  scheme defined by Eq.~\ref{eq:cardSplit2}.
Performing    this   maximization    using    a    standard   Lagrangian    Dual
procedure~\cite{Boyd04} requires evaluating the  constraint and its derivatives.
Despite the potentially exponentially large number of terms involved, we can
  do this in one of two ways. In both cases, the 
   Lagrangian    Dual procedure reduces to  a series of  unconstrained Mean Field  minimizations  
  with well known additional potentials.
\begin{enumerate}{\leftmargin=1.25em}
\itemsep0em

\item When $C$ is close to $0$ or  to $L$, the Lagrangian term can be treated as
  a    specific   form    of   pattern-based    higher-order   potentials,    as
  in~\cite{Vineet14,Fleuret08a,Kohli12,Arnab15}.

\item When $C$ is both substantially greater  than zero and smaller than $L$, we
  treat $ \sum_{u = 1  \dots L} \mathbbm{1}(\bX_{i_u} = v_u)$ as a
  large sum of independent  random  variables  under $Q_1$.  We
  therefore use a  Gaussian approximation to replace  the cardinality constraint
  by a simpler  linear one, \comment{on a  sum of variables $q^1_i$.} and finally add 
  unary potentials to the MF problem. Details are provided
  in the supplementary material.
 
\end{enumerate}
We will encounter  the first situation when tracking pedestrians  and the second
when  performing semantic  segmentation, as  will  be discussed  in the  results
section. 

\comment{ segmentation, instead of enforcing that at least $C$ pixels take label $v$, we
  enforce that the expected number of pixels taking label $v$ is large enough.}

\subsubsection{Handling an Arbitrary Number of Nodes}

Recall from  Section~\ref{sec:partioningStateSpace} that,  in the  general case,
there can  be an  arbitrary number of  modes. They correspond  to the  leaves of
a binary tree created by a succession of cardinality-based splits.
Let us therefore  consider mode $k$ for  $1 \leq k \leq  K$. Let $B$ be  the set of
branching points on the path leading to it. The {\it near} disjointness~\ref{eq:MultMeanFieldConstraintEpsilon},
can be enforced with only $| B |$ constraints.
For each $b \in B$, there is a list
of variables $i^b_1,\dots,i^b_{L^b}$, a  list of values $v^b_1,\dots,v^b_{L^b}$,
a cardinality threshold $C^b$, and a  sign for the inequality $\geq_b$ that
define a constraint
\begin{equation}
  Q_k \left( \sum \limits_{u  = 1  \dots  L^b} \mathbbm{1}(\bX_{i^b_u} = v^b_u)  \geq_b C^b \right) \leq \epsilon 
  \label{method:eq:branchconstraints}
\end{equation}
of  the  same  form  as that  of  Eq.~\ref{method:eq:constraints}.   It  ensures
disjointness with all the modes in the subtree  on the side of $b$ that mode $k$
does not belong to.  Therefore, we can solve the constrained maximization problem
of Eq.~\ref{method:eq:max_A_k}, as in  Section~\ref{sec:twoModes}, but with $| B
|$ constraints  instead of only  one.  \comment{  \PB{I propose:} \pb{  For each
    mode $k$, computing $Q_k$, we  solve an optimisation problem involving $\mid
    B \mid$ constraints of the type of~\ref{method:eq:constraints}, where $B$ is
    the set of branching points in the path leading to leaf $k$.  } }


\comment{
\section{Looking for Phase Transitions}
\label{section:transitions}

\comment{
\begin{figure}[ht!]
\begin{center}
\begin{tabular}{@{}cc}
\includegraphics[width=0.22\textwidth]{{illustrations/MF_surf_HT}.pdf} &
\includegraphics[width=0.22\textwidth]{{illustrations/MF_surf_LT}.pdf}  \\
 KL-div high T & KL-div low T \\
\end{tabular}
\end{center}
   \caption{KL-divergence for a two-variables model.}
\label{fig:method:KL_surf}
\end{figure}
}

\subsection{Gaussian CRF}
\label{sec:gaussiancrf}

Let us first consider the case of a dense Gaussian CRF. \PF{Why does ``dense'' matter?}

Our  method necessitates  to know  approximately  the value  of the  temperature
parameter T  for which phase transitions  happen. In the  case of the d,  we can
exhibit analytically the existence of this phase transition and computer exactly
the temperature where it occurs.

\comment{Let us denote by $P$ the probability distribution defined by a dense Gaussian conditional random field~\cite{Kraehenbuehl13}.
 Let us assume that we have a large Gaussian CRF with two classes. Formally, on a $N \times N$ dense grid, the energy function is defined as
\begin{align*}
E(\bx) & =  \dfrac{\Gamma}{2\pi \sigma^2} \sum \limits_{(i,j),(i^\prime,j^\prime)}  \mathbbm{1}[\bx_{(i,j)} \neq \bx_{(i^\prime,j^\prime)} ]  e^{\dfrac{\|(i,j)-(i^\prime,j^\prime)\|^2}{2 \sigma}} \\
& +  \sum \limits_{(i,j)} U_{(i,j)} \mathbbm{1}[\bx_{(i,j)} =0]\;,
\end{align*}
where $\sigma$ controls the range of the correlations and $U_{(i,j)}$ is a unary potential.
}
Let us  perform a MF  approximation of a  probability distribution defined  by a
dense Gaussian conditional random field~\cite{Kraehenbuehl13}. We further assume
the  unaries are  uniform and  have  the same  value  $U$ for  both classes.  In
practice, this means that the local classifier thinks that there is an ambiguity
between  two  labels.  Under  this  assumption, we  show  in  the  supplementary
material, that the number of local minima is equal to the number of stable fixed
points of
\begin{equation}
\label{method:eq:fixed_point}
\tilde{q}^T= \dfrac{1}{2}  \tanh \left({\dfrac{\tilde{q}^T\Gamma -U }{T}} \right)\;,
\end{equation}
where $\Gamma$ is a global scaling factor for the Gaussian potentials.
Fig.~\ref{fig:method:tanh}  shows  a  graphical illustration  of  the  solutions
of~\ref{method:eq:fixed_point},  where  we  see  that the  number  of  solutions
depends on wether or not the derivative at  $0$ of the red curve is greater than
one,  which is  equivalent to  $T$  being larger  or smaller  that the  critical
temperature $T_c  = \dfrac{\Gamma}{2}$. When  unaries are non-zero, there  is no
closed       form       solution       for      $T_c$,       however,       from
Eq.~\ref{method:eq:fixed_point},  we   can  show  that  the   stronger  the
correlations  ($\Gamma$)  and the  smaller  the  unaries  ($U$), the  lower  the
critical temperature will be.

\begin{figure}[ht!]
\begin{center}
\begin{tabular}{@{}cc}
\includegraphics[width=0.2\textwidth]{{illustrations/tanh}.pdf} &
\includegraphics[width=0.2\textwidth]{{illustrations/tanh_strong_unaries}.pdf} \\
Without unaries & With unaries
\end{tabular}
\end{center}
   \caption{ $\tanh(\dfrac{\tilde{q}\Gamma - U}{T})$ for two temperatures. Low T (blue) and High T (red).}
\label{fig:method:tanh}
\end{figure}

\begin{figure}[ht!]
\begin{center}
\includegraphics[width=0.40\textwidth]{{illustrations/phase_transition_expe}.pdf} \\
\end{center}
   \caption{ Experimental average $q$ at convergence for dense Gaussian CRF and predicted transition threshold (dashed line).}
\label{fig:method:transition}
\end{figure}

In order to validate experimentally our derivations, we use the dense CRF implementation of~\cite{Kraehenbuehl13} for which we measure locally the average output of the Mean Field inference and for several temperature levels an plot it in Figure~\ref{fig:method:transition} for $\Gamma =10$. \comment{As we will see later, real images with RGB Kernels undergo phase transitions around the same temperatures.} Interestingly, in practice, the users of DenseCRF choose the $\Gamma$ and $T$ parameters in order to be in a Multi-Modal regime, but close to the phase transition. For instance in the public releases of~\cite{Chen15b} and ~\cite{Zheng15}, the Gaussian kernel is set with $T = 1$ and $\Gamma = 3$.

\subsection{General CRF}
\label{sec:generalcrf}
The intuition from the previous section can be generalised to other CRFs. Indeed, depending on the structure of the graph, on the shape and strength of the potentials and on the unaries, MF inference can have or not a phase transition from global to local minimum as $T$ is decreased.
For some structures, the phase transition temperature can be estimated analytically, but in the general case, we can find it experimentally by decreasing the temperature slowly until some variables get suddenly polarised.

\subsection{Using phase transitions to select groups of variables to clamp.}

The variables and groups of correlated variables which have strong unaries will tend to have only one minimum, even at low temperatures and will not show phase transition. On the other hand, some variables may jointly converge to a local minimum at low temperature whereas other (and potentially better) solutions exist.~\cite{Weller15} uses several heuristics which basically consist in looking for high correlations and low unaries directly in the potentials of the graphical model, in order to find good variables to clamp. We, instead use a criterium based on the critical temperature in order to spot these.

We will describe the method to split $\mX$ into $\mX_0$ and $\bar{\mX_0}$ at the root node of the tree. The other splits at subsequent nodes of the tree are done with the same method. The idea behind our procedure is to spot the variables that transition between two states at low and high temperature.

We run the Mean Field algorithm on $P^T(\bx)$ for $T$  in a range of temperatures $T=1,T_1,\dots,T_{max}$, which should cover the critical temperature $T_c$, approximated as discussed in Sections~\ref{sec:gaussiancrf}and~\ref{sec:generalcrf}. In order accelerate convergence, one can start with the highest temperature and initialise the iterates for the next temperature with the results from the previous ones. We therefore obtain a family of Mean Field distributions $\{Q^T\}_{T_0 = 1,T_1,\dots,T_{max}}$. We then compare $Q^{T_0}$ to $Q^{T_1}$ term by term, looking for variables for which the value of the Mean Field marginals have changed a lot.

More precisely, for each variable $i$, we compute
\begin{equation}
 \delta_{i,v}(T_0,T_1) = \mathbbm{1}[0.4<q^{T_1}_{i,v}<0.6]\mathbbm{1}[q^{T_0}_{i,v}>0.9] \;,
 \end{equation}
 where the values of the thresholds can vary slightly depending on the application, the family of temperatures and the number of labels. We select all the variables and labels with positive positive $\delta_{i,v}$ and choose them as the clamping variables and values $\{i_1,\dots,i_L\},\{v_1,\dots,v_L\}$ for the next split (see~\ref{sec:MMMF}  for definition). If no variable has a positive $\delta_{i,v}$ then we move to comparing $Q^{T_0}$ to $Q^{T_2}$ and then $Q^{T_0}$ to $Q^{T_3}$ and so on until we obtain a positive  $\delta_{i,v}$.  In practice, for large CRFs, several groups of variables may transition at the same time, we can therefore restrict groups of variables to be in the same region of the graph, it can be a region of the image for segmentation or a region of the detection grid for detection.

 Concretely, for segmentation, we look for regions where variables have strong $q$s toward one label, which suddenly get less polarised toward this label at higher temperature. Then, we create two modes, one where we enforce at least $C$ pixels keep the same labelling as initially, and one where less than $C$ do.
Algorithm~\ref{algo:MMMF} summarises the operations to obtain the Multi-Modal Mean Field Distribution.
\begin{algorithm}
\label{algo:Split}
\caption{Function:$\texttt{Split}(ConstraintList)$}
\textbf{Input:}\\
$E(\bx)$: An Energy function defined on a CRF;\\
$\texttt{SolveMF}(E,ConstraintList)$: A Mean Field solver with cardinality constraint (Subsection ~\ref{sec:solver});\\
$Temperatures$: A list of temperatures in increasing order;\\
$Threshold$: A threshold for the phase transition\\
\textbf{Output:} \\
$LeftConstraints$: A triplet containing a list of variables, clamped to value, -1 \\
$RightConstraints$: A triplet containing a list of variables, clamped to value, 1\\
\begin{algorithmic}
\STATE{$Q^{T_0} \leftarrow \texttt{SolveMF}(E)$}
\FOR{$\texttt{T in }Temperatures$}
	\STATE{$Q^T \leftarrow \texttt{SolveMF}(\dfrac{E}{T},ConstraintList)$}
	\STATE{$i_{list} \leftarrow [.]$}
	\STATE{$v_{list} \leftarrow [.]$}
	\FOR{\texttt{index in $1\dots \texttt{len}(Q^t)$, v in }$labels$}
		\IF{$\mathbbm{1}[0.4<q^{T}_{index,v}<0.6]\mathbbm{1}[q^{T_0}_{index,v}>0.9] = 1$}
			\STATE{$i_{list}$\texttt{.append(index)},$ v_{list}$\texttt{.append(v)}}
		\ENDIF
	\ENDFOR
	\IF{$\texttt{len}(i_{list}) >0$}
	\STATE{$\texttt{exit for loop}$}
	\ENDIF
\ENDFOR
	\STATE{$LeftConstraints$ = $i_{list}, v_{list},-1$}
	\STATE{$RightConstraints$ = $i_{list}, v_{list},1$}
\RETURN{$LeftConstraints$,$RightConstraints$}
\end{algorithmic}
\end{algorithm}

\begin{algorithm}
\caption{Calculate Multi-Modal Mean Field}
\label{algo:MMMF}
\textbf{Input:}\\
$E(\bx)$: An Energy function defined on a CRF;\\
$\texttt{SolveMF}(E,ConstraintList)$: A Mean Field solver with cardinality constraint;\\
$\texttt{Split}(ConstraintList)$: A function that computes the constraints to add at a given node. Defined in Algorithm 1.\\
$NModes$: A target for the number of modes in the Multi-Modal Mean Field\\
\textbf{Output:} \\
$Qlist$: A list of Mean Field distributions in the form of a table of marginals \\
$mlist$: A list of probabilities, one for each mode\\
\begin{algorithmic}
\STATE{$ConstraintTree =[.]$}
\WHILE{$nNode <NModes$}
	\STATE{$ConstraintList =[.]$}
	\FOR{$p  \texttt{ in pathto}(nNode)$}
		\STATE{\texttt{$ConstraintList$.append(ConstraintTree[p])}}
	\ENDFOR
	\STATE{$LeftConstraints,RightConstraints \leftarrow \texttt{Split}(ConstraintList)$}
	\STATE{$ConstraintTree$.\texttt{append}($LeftConstraints$)}
	\STATE{$ConstraintTree$.\texttt{append}($RightConstraints$)}
\ENDWHILE
\STATE{$Qlist=[.],Zlist =[.], mlist =[.]$}
\FOR{mode in $0 \dots NModes$}
	\STATE{$ConstraintList =[.]$}
	\FOR{$p  \texttt{ in pathto}(mode + NModes -1)$}
		\STATE{\texttt{$ConstraintList$.\texttt{append}(ConstraintTree[p])}}
	\ENDFOR
	\STATE Q,Z $\leftarrow$ \texttt{SolveMF}(E,ConstraintList)
	\STATE{$Qlist.\texttt{append}(Q)$}
	\STATE{$Zlist.\texttt{append}(Z)$}
\ENDFOR
\FOR{$mode$ in $0 \dots NModes$}
	\STATE{$mlist.\texttt{append}(\dfrac{Zlist[mode]}{\sum Zlist})$}
\ENDFOR

\end{algorithmic}
\end{algorithm}

}
\comment{
In order to get more intuition about the meaning of $A_k$ and $m_k$, let us look at the task of approximating the partition function of an MRF. Indeed, the Multi-Modal Mean Fields also  provides a way to get a valid variational lower bound to $Z$, which is always better than the naive Mean Fields one.\\

We can show that
\begin{align*}
Z  &=\sum \limits_{\bx \in \mX} e^{-E(\bx)} \\
 & = \sum \limits_{k \leq M} \sum \limits_{\bx \in \mX_k} e^{-E(\bx)}  \\
 & \geq \sum \limits_{k \leq M} Z^{MF}_k
\end{align*}
Where each $Z^{MF}_k$ is a Mean Field mode approximation to the partial partition function $ \sum \limits_{\bx \in \mX_k} e^{-E(\bx)} $. Furthermore $Z^{MF}_k = e^{A_k}$, and hence, for all k,
\begin{equation}
m_k = \dfrac{e^{A_k}}{\sum \limits_{k^\prime \leq M} e^{A_{k^\prime}}} =  \dfrac{Z^{MF}_k}{\sum \limits_{k^\prime \leq M} Z^{MF}_{k^\prime}}
\end{equation}
}


\section{Selecting Variables to Clamp}
\label{sec:clamp}

 We  now
present an approach to choosing   the   variables  $i_1,\dots,i_L$   and   the   values
$v_1,\dots,v_L$, which define the cardinality splits of Eqs.~\ref{eq:cardSplit1}
and~\ref{eq:cardSplit2}, that relies on phase transitions in  the graphical model.

To this end, we first introduce a  {\it temperature} parameter in our model that
lets us  smooth the probability  distribution we  want to approximate. This well known
parameter for physicists~\cite{Kadanoff09} was used in a different context in vision by~\cite{Premachandran14}.
 We study its influence on the corresponding MF approximation and how we can exploit
the resulting behavior to select appropriate values for our variables.

\subsection{Temperature and its Influence on Convexity}
\label{sec:tempearature}

We  take the  temperature  $T$  to be  a  number that  we  use  to redefine  the
probability distribution of Eq.~\ref{eq:related:mrf} as
\begin{equation}
  \vspace{-1mm}
  P^T(\bx) = \dfrac{1}{Z^T}e^{-\small{\dfrac{1}{T}}\mE(\bx)} \;,
  \label{method:eq:temp}
\end{equation}
where $Z^T$ is the partition function that normalizes $P^T$ so that its integral
is one.   For $T=1$, $P^T$ reduces  to $P$. As  $T$ goes to infinity,  it always
yields  the same  Maximum-A-Posteriori  value but  becomes increasingly  smooth.
When  performing  the MF  approximation  at  high $T$,  the  first  term of  the
KL-Divergence,  the convex  negative entropy,  dominates and  makes the  problem
convex. As  $T$ decreases, the  second term  of the KL-Divergence,  the expected
energy, becomes dominant, the function stops  being convex, and local minima can
start to appear.  In the supplementary material, we introduce a physics-inspired proof that, in the case of a dense Gaussian CRF~\cite{Kraehenbuehl13}, we can approximate and upper-bound, in  closed-form, the  {\it critical temperature} $T_c$ at which
the  KL divergence  stops being  convex. We validate experimentally this prediction, 
using directly the {\it denseCRF} code from~\cite{Kraehenbuehl13}. This  makes it  easy to
define a temperature  range $[1,T_{max}]$ within which to look  for $T_c$. For a
generic  CRF,  no  such computation  may  be  possible  and  the range  must  be
determined empirically.

\subsection{Entropy-Based Splitting}

We describe here our approach to  splitting $\mX$ into $\mX_1$ and $\mX_2$
at the root node of the tree. The subsequent splits are done in exactly the same
way.  The variables  to be clamped are  those whose value change  from one local
minimum to another so that we can force the exploration of both minima.

To  find them,  we start  at $T_{max}$,  a temperature  high enough  for the  KL
divergence  to be  convex  and  progressively reduce  it.   For each  successive
temperature, we perform the MF approximation  starting with the estimate for the
previous one to speed up the computation.   When looking at the resulting set of
approximations starting from the lowest  temperature ones $T=1$, a telltale sign
of increasing convexity  is that the assignment of some  variables that were very
definite suddenly becomes uncertain. Intuitively, this happens when the CRF terms
that  bind  variables   is  overcome  by  the  entropy   terms  that  encourage
uncertainty. In physical terms, this can be viewed as a local phase-transition~\cite{Kadanoff09}.


Let $T$  be a temperature greater  than $1$ and  let $Q^{T}$ and $Q^{1}$  be the
corresponding  Mean  Field  approximations, with  their  marginal  probabilities
$q^{T}_{i}$  and $q^{1}_{i}$ for each variable $i$.   To  detect such  phase  transitions, we  compute
 \vspace{-1mm}
 \begin{equation}
 \label{eq:entropy}
 \delta_{i}(T) = \mathbbm{1}[\mH(q^{T}_{i})>h_{high}]\mathbbm{1}[\mH(q^{1}_{i})<h_{low}] \;,
 \end{equation}
 \vspace{-1mm}
 for all $i$, where $\mH$ denotes  the individual entropy. \comment{The values of the
 thresholds  $h_{high}$  and  $h_{low}$  can  vary  slightly  depending  on  the
 application, the family of temperatures, and the number of labels.}
 
All  variables  and labels  with  positive  $\delta_{i}$ become  candidates  for
clamping. If there are none, we increase the temperature. If there
are several, we  can either pick one  at random or use domain  knowledge to pick
the most suitable subset and values as will be discussed in the Results Section.



\section{Results}
\label{sec:results}

We  first  use  synthetic  data  to demonstrate  that  MMMF  can  approximate  a
multi-modal probability  density function better  than both standard MF  and the
recent approach  of~\cite{Weller15}, which  also relies  on clamping  to explore
multiple  modes.   We  then  demonstrate  that  this  translates  to  an  actual
performance   gain    for   two    real-world   algorithms---one    for   people
detection~\cite{Fleuret08a}          and          the         other          for
segmentation~\cite{Chen15b,Yu16b}---both  relying on  a  traditional Mean  Field
approach.  We will make all our code and test datasets publicly available.

The parameters that control MMMF are the  number of modes we use, the cardinality threshold $C$
at   each   split,   the   $\epsilon$   value   of
Eq.~\ref{eq:MultMeanFieldConstraintEpsilon},  the  entropy thresholds  $h_{low}$
and $h_{high}$ of Eq.~\ref{eq:entropy},  and the temperature $T_{\max}$ introduced
in   Section~\ref{sec:clamp}.    In   all    our   experiments,   we   use
$\epsilon=10^{-4}$, $h_{low} = 0.3$, and  $h_{high} = 0.7$.  As discussed in
  Section~\ref{sec:clamp}, when  the CRF is  a dense Gaussian CRF,  we can
  approximate and upper bound the critical  temperature $T_c$ in closed-form and  we simply take
  $T_{\max}$ to  be this upper bound to guarantee that $T_{\max}>T_c$.   Otherwise, we
  choose  $T_{\max}$ empirically  on a  small validation-set  and fix  it during
  testing.


\subsection{Synthetic Data}
\label{sec:synthetic}

To  demonstrate  that our  approach  minimizes the  KL-Divergence better  than  both
standard  MF  and   the  clamping  one  of~\cite{Weller15},  we   use  the  same
experimental protocol to generate conditional  random fields with random weights
as  in~\cite{Eaton09,Weller14,Weller15}.  Our  task  is then  to  find the  MMMF
approximation with  lowest KL-Divergence  for any given  number of  nodes.  When
that number is one, it reduces  to MF.  Note that the authors of~\cite{Weller15}
look for an  approximation of the log-partition function, which  is strictly the
same  as minimizing  the  KL-Divergence, as  demonstrated  in the  supplementary
material. Because  it involves randomly  chosen positive and  negative weights,
this problem effectively mimics difficult  real-world ones with repulsive terms,
uncontrolled loops, and strong correlations.

In Fig.~\ref{fig:evaluation:KL}, we plot the  KL-Divergence as a function of the
number of modes used to approximate the distribution on the standard benchmarks.
These modes are obtained using either  our entropy-based criterion as described in Section~\ref{sec:clamp}, 
or the MaxW one of~\cite{Weller15},
which  we will  refer to  as \BASEM{}.   It involves  sequentially clamping  the
variable having the largest  sum of absolute values of  pairwise potentials  for edges
linking it to its neighbors. It  was shown to be one of the best
methods   among  several   others,  which   all  performed   roughly  similarly.
\comment{We  don't  use additional  techniques  on  top  of MaxW,  as  suggested
  by~\cite{Weller15}, such as the {\it stripping to the core} or {\it singleton}
  ones,  since that  these  methods  involve coupling  MF  with other  inference
  methods and are hardly applicable to our problems.}  \comment{Note that, since
  that the value of the log-partition function $\log(Z)$ is unknown, we can only
  compute the KL-Divergence up to a constant.  However, to estimate how close to
  zero  we are  and  to  get a  better  sense of  the  improvement,  we use  the
  upper-bound of $\log(Z)$ provided by a TRW approximation~\cite{Wainwright02}.}
\comment{
\begin{itemize}
  
\item  {\bf  Attractive  grid.}  $N  \times N$  grid  with  attractive  pairwise
  potentials drawn uniformly from $[0,6]$ and unaries from $[-2,2]$.
  
\item {\bf Mixed grid.} $N \times N$ grid with attractive and repulsive pairwise
  potentials drawn uniformly from $[-6,6]$ and unaries from $[-2,2]$.
  
\item {\bf Attractive  random.}  49-node graph with the same  number of randomly
  chosen edges  as the $N \times  N$ grid, attractive pairwise  potentials drawn
  uniformly from $[0,6]$, and unaries drawn from $[-2,2]$.
  
\item {\bf  Mixed random.}  49-node graph  with the same  number of  of randomly
  chosen  edges as  the $N  \times N$  grid, attractive  and repulsive  pairwise
  potentials drawn uniformly from $[-6,6]$, and unaries drawn from $[-2,2]$.
\end{itemize}
}
In   our    experiments,   we    used   the   phase-transition    criterion   of
Section~\ref{sec:clamp} to select  candidate variables to clamp.  We then either
randomly chose the  group of $L$ variables  to clamp or used  the MaxW criterion
of~\cite{Weller15} to select the best $L$  variables. We will refer to the first
as \OURSR{} and to the second as \OURSM{}. Finally, in all cases, $C=L$ and the
values $v_u$ correspond to the ones taken by the MAP of the mode split. 

In  Fig.~\ref{fig:evaluation:KL}, we  plot the  resulting curves  for $L=1$  and
$L=3$, evaluated on 100 instances. \OURSR{} performs better than the 
method~\BASEM{}  in most cases, even though it
does  not use  any knowledge  of the  CRF internals,  and \OURSM{},  which does,
performs even better. 
The results on the $13\times13$ grid demonstrate the advantage
of clamping variables by groups when the CRF gets larger.

\comment{
\begin{figure}[ht!]
\begin{center}
\includegraphics[width=0.50\textwidth]{{KL5}.png} \\
\end{center}
\caption{Lower  bounds of  the KL-divergence  for our  clamping method  and that
  of~\cite{Weller15}.}
\label{fig:evaluation:KL}
\end{figure}
}

\begin{figure*}[ht!]
\vspace{-2mm}
\begin{center}
\begin{tabular}{c@{}c@{}c@{}c}
\setlength{\tabcolsep}{0pt}
\includegraphics[trim=8mm 8mm 8mm 8mm, clip, width=0.25\textwidth]{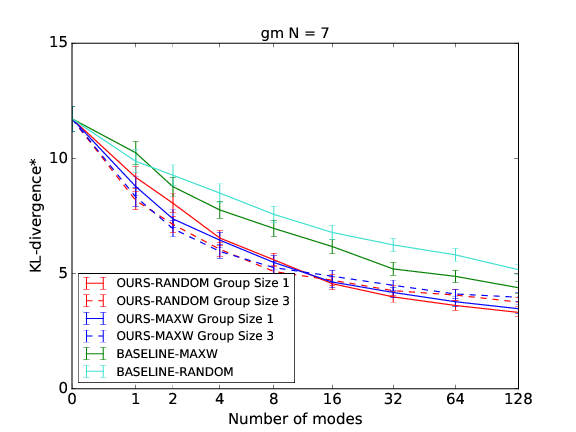} & 
\includegraphics[trim=8mm 8mm 8mm 8mm, clip, width=0.25\textwidth]{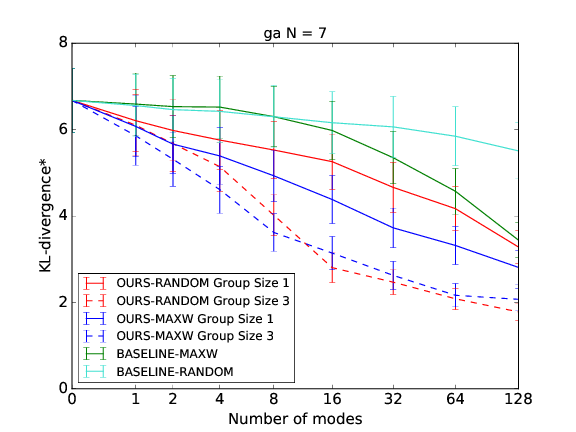} &
\includegraphics[trim=8mm 8mm 8mm 8mm, clip, width=0.25\textwidth]{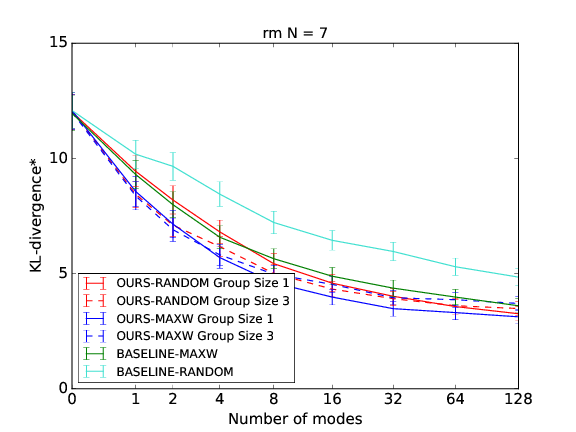} &
\includegraphics[trim=8mm 8mm 8mm 8mm, clip, width=0.25\textwidth]{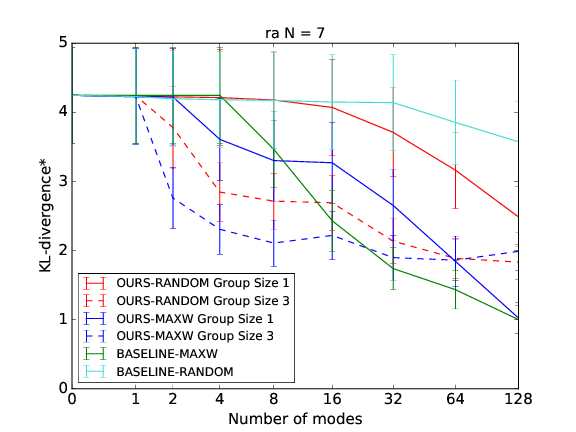} \\
\includegraphics[trim=8mm 8mm 8mm 8mm, clip, width=0.25\textwidth]{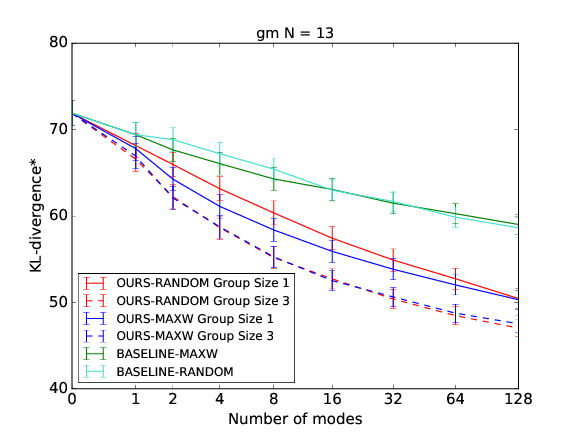} & 
\includegraphics[trim=8mm 8mm 8mm 8mm, clip, width=0.25\textwidth]{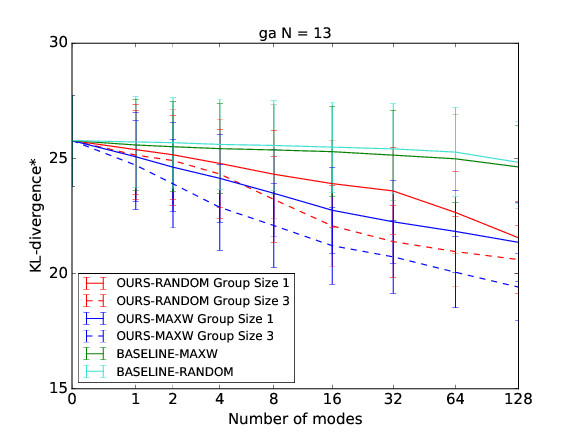} &
\includegraphics[trim=8mm 8mm 8mm 8mm, clip, width=0.25\textwidth]{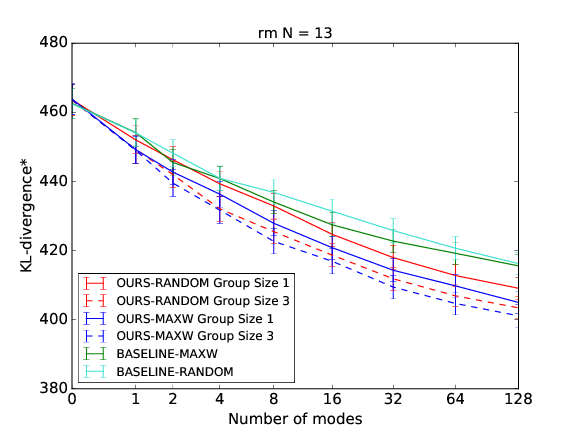} &
\includegraphics[trim=8mm 8mm 8mm 8mm, clip, width=0.25\textwidth]{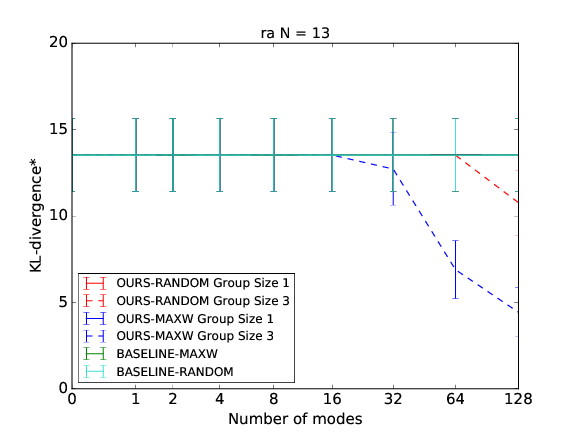} \\
 \sttt{Mixed grid} &  \sttt{ Attractive grid} &
 \sttt{Mixed random} &  \sttt{ Attractive random}  \\
\end{tabular}
\end{center}
\vspace{-0.2cm}
   \caption{KL-divergence   using   either   our   clamping   method   or   that
     of~\cite{Weller15} averaged  over 100  trials. The vertical  bars represent
     standard deviations.  {\bf Attractive} means  that pairwise terms are drawn
     uniformly   from  $[0,6]$   whereas  {\bf   Repulsive}  means   drawn  from
     $[-6,6]$. {\bf  Grid} indicates a grid  topology for the CRF,  whereas {\bf
       Random}  indicates that  the connections  are chosen  randomly such  that
     there are as many  as in the grids. We ran our  experiments with both
     $7 \times 7$ and $13 \times 13$ variables CRFs.}
\label{fig:evaluation:KL}
\vspace{-0.3cm}
\end{figure*}

\subsection{Multi-modal Probabilistic Occupancy Maps}
\label{sec:MMPOM}
The  Probabilistic  Occupancy  Map   (POM)  method~\cite{Fleuret08a}  relies  on
Mean Field inference for pedestrian detection.  More specifically, given several
cameras  with overlapping  fields of  view of  a discretized  ground plane,  the
algorithm  first  performs  background   subtraction.   It  then  estimates  the
probabilities of  occupancy at  every discrete  location as  the marginals  of a
product law minimizing the KL divergence from the ``true'' conditional posterior
distribution, formulated  as in  Eq.~\ref{eq:related:mrf} by defining  an energy
function.  Its  value is  computed by  using a  generative model:  It represents
humans  as simple  cylinders projecting  to  rectangles in  the various  images.
Given the  probability of presence or  absence of people at  different locations
and known camera  models, this produces synthetic images whose  proximity to the
corresponding background subtraction  images is measured and used  to define the
energy.

This algorithm is usually very robust but can fail when multiple interpretations
of a  background subtraction image  are possible.   This stems from  the limited
modeling  power  of  the  standard  MF  approximation,  as  illustrated  in  the
supplementary material.  We show here that,  in such cases, replacing MF by MMMF
while retaining the rest of the framework yields multiple interpretations, among
which the correct one is usually to be found.


Fig.~\ref{fig:evaluation:pomPlot}  depicts what  happens when  we replace  MF by
MMMF  to approximate  the true  posterior, while  changing nothing  else to  the
algorithm.    To   generate    new   branches    of   the    binary   tree    of
Section~\ref{sec:partioningStateSpace}, we find potential  variables to clamp as
described in  Section~\ref{sec:clamp}. Among those,  we clamp the one  with the
largest  entropy gap---$\mH(q^{T}_{i})-\mH(q^{1}_{i})$,  using the  notations of
Eq.~\ref{eq:entropy}---and  its  neighbors on  the  grid.   When evaluating  our
cardinality constraint, we take $C$ to be 1, meaning that one branch of the tree
corresponds to no one in the neighborhood of the selected location and the other
to at least  one person being present in this  neighborhood.  Since we typically
create those  locations by discretizing the  ground plane into $10  cm \times 10
cm$ grid cells, this forces the two newly instantiated modes to be significantly
different  as  opposed  to  featuring  the  same  detection  shifted  by  a  few
centimeters.   In  Fig.~\ref{fig:evaluation:pomPlot},  we plot  the  results  as
dotted  curves  representing  the  MODA  scores as  functions  of  the  distance
threshold  used to  compute them~\cite{Bernardin08}.   In all  cases, we  used 4
modes    for    the   MMMF    approximation    and    followed   the    DivMBest
evaluation metric~\cite{Batra13} to  produce a score  by selecting among the  4 detection
maps corresponding  to each mode the  one yielding the highest  MODA score. This
produces red  dotted MMMF curves that  are systematically above the  blue dotted
MF.

However, to turn this  improvement into a practical technique, we  need a way to
choose among the  4 possible interpretations without using the  ground truth. \pb{We
  use temporal  consistency to jointly find the best sequence of modes, and reconstruct
  trajectories from this sequence.}  In  the original  algorithm, the  POMs
computed at  successive instants  were used  to produce  consistent trajectories
using  the a  K-Shortest  Path (KSP)  algorithm~\cite{Berclaz11}. This  involves
building a graph in which each ground  location at each time step corresponds to
a node and  neighboring locations at consecutive time steps  are connected.  KSP
then finds a set of node-disjoint shortest paths in this graph where the cost of
going through a location is proportional  to the negative log-probability of the
location in  the POM~\cite{Suurballe74}.  Since MMMF  produces multiple POMs,
  we then  solve a multiple  shortest-path problem in  this new graph,  with the
  additional constraint that at each time step  all the paths have to go through
  copies  of the  nodes corresponding  to the  same mode,  as described  in more
  details in the supplementary material.

\comment{Since MMMF produces  multiple POMs, one
for each  mode, at each  time-step, we duplicate the  KSP graph nodes,  once for
each mode.   Each node is then  connected to each copy  of neighboring locations
from previous and  following time steps. We then solve  a multiple shortest-path
problem in this new graph, with the additional constraint that at each time step
all the paths have  to go through copies of the nodes  corresponding to the same
mode.   This larger  problem is  NP-Hard and  cannot be  solved by  a polynomial
algorithm such as  KSP.  We therefore use a Mixed-Integer  Linear solver after a
KSP-based pruning procedure,  as explained in more details  in the supplementary
material.}

The solid blue lines in Fig.~\ref{fig:evaluation:pomPlot} depict the MODA scores
when using  KSP and  the red  ones the  multi-modal version,  which we  label as
KSP$^*$. The MMMF curves are again above  the MF ones.  This makes sense because
ambiguous situations rarely persist  for more than a few a  frames. As a result,
enforcing temporal consistency eliminates them. 


%
\begin{figure}[ht!]
\begin{center}
\begin{tabular}{@{}cc}
\includegraphics[width=0.45\textwidth]{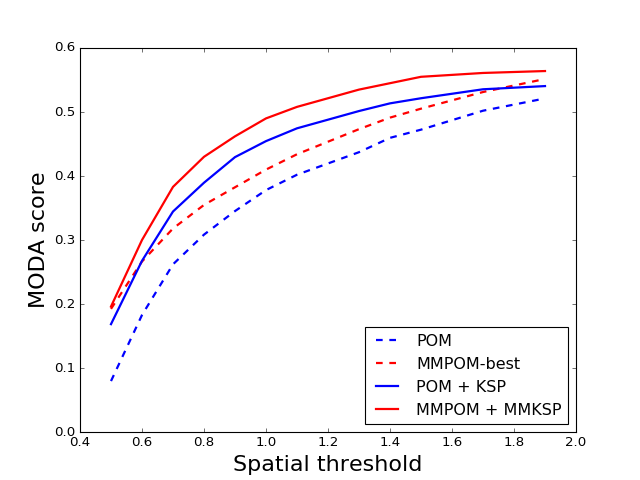} & \includegraphics[width=0.45\textwidth]{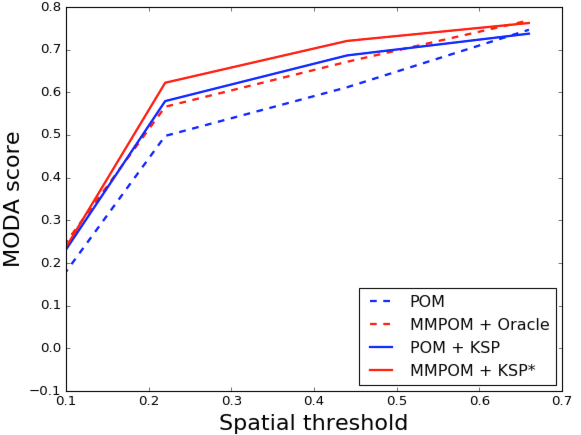}  \\
\includegraphics[width=0.45\textwidth]{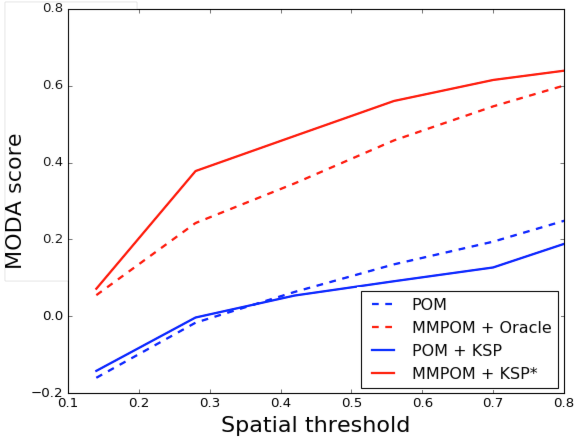} & \includegraphics[width=0.45\textwidth]{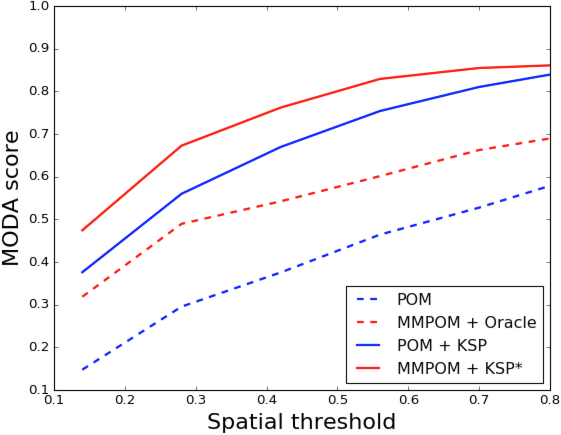} \\
\end{tabular}
\end{center}
\vspace{-0.2cm}
   \caption{Replacing  MF by  MMMF in  the POM  algorithm~\cite{Fleuret08a}. The
     blue curves are  MODA scores~\cite{Bernardin08} obtained using  MF and the
     red ones  scores using  MMMF. They  are shown as  solid lines  when temporal
     consistency was enforced  and as dotted lines otherwise. Note  that the red
     MMMF lines  are above corresponding  blue MF ones  in all cases.   (a) 1000
     frames from the  MVL5~\cite{Mandeljc12} dataset using  a single camera.
     (b) 400 frames from the Terrace dataset~\cite{Berclaz11} using two cameras.
     (c) 80 frames  of the EPFL-Lab dataset~\cite{Berclaz11} using  a single camera.
     (d) 80 frames from the EPFL-Lab dataset~\cite{Berclaz11} using two cameras. }
\label{fig:evaluation:pomPlot}
\vspace{-0.3cm}
\end{figure}
\subsection{Multi-Modal Semantic Segmentation}
\label{sec:semanticSegmentation}

CRF-based semantic segmentation is one of best known application of MF inference
in   Computer    Vision   and   many    recent   algorithms   rely    on   dense
CRF's~\cite{Kraehenbuehl13} for this purpose.  We demonstrate here that our MMMF
approximation  can   enhance  the  inference   component  of  two   such  recent
algorithms~\cite{Chen15b,Yu16b}  on the Pascal VOC 2012 segmentation dataset and
the MPI video segmentation one ~\cite{Galasso13}.

\vspace{-4mm}
\paragraph{Individual VOC Images}
  
We write the  posterior in terms of the CRF of~\cite{Chen15b},  which we try to
  approximate.   To   create    a   branch    of   the    binary   tree    of
Section~\ref{sec:partioningStateSpace}, we first find the potential variables to
clamp  as  described  in Section~\ref{sec:clamp}.   As  in~\ref{sec:MMPOM},  we
select  the  ones   in  the  sliding  window  with  the   largest  entropy  gap,
$\mH(q^{T}_{i})-\mH(q^{1}_{i})$.  We then  take $C$ to be  $L/2$ when evaluating
our cardinality  constraint, meaning that we  seek the dominant label  among the
selected variables and split the state space into those for which more than half
these variables take this value and those in which less than half do.

\begin{figure}[ht!]
\begin{center}
\begin{tabular}{@{}cc}
\includegraphics[width=0.42\textwidth]{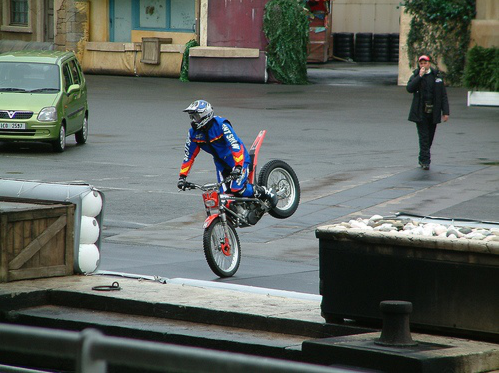} &
\includegraphics[width=0.42\textwidth]{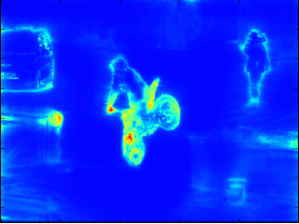}  \\
(a) & (b) \\
\includegraphics[width=0.42\textwidth]{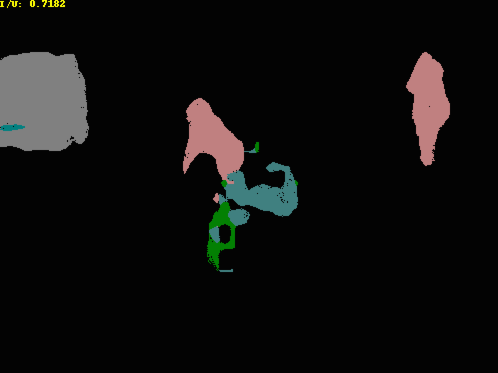} &
\includegraphics[width=0.42\textwidth]{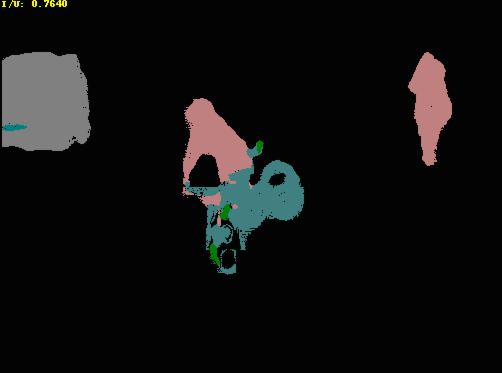}  \\
(c) & (d) \\
\end{tabular}
\end{center}
\vspace{-0.2cm}
   \caption{Qualitative semantic segmentation.  (a) Original image.  (b) Entropy
     gap.  (c)   Labels  with   maximum  a   Posteriori  Probability   after  MF
     approximation. (d)  Labels with  maximum a  Posteriori Probability  for the
     best mode of the MMMF approximation.}
\label{fig:evaluation:VOC_qualitative}
\vspace{-0.3cm}
\end{figure}
Fig.~\ref{fig:evaluation:VOC_qualitative} illustrates the results  on an image of
the  VOC dataset. To evaluate  such results  quantitatively, we  first use  the
DivMBest metric~\cite{Batra13}, as we  did in Section~\ref{sec:MMPOM}. We assume
we have an oracle that can select the best mode of our multi-modal approximation
by looking  at the  ground truth.  Fig.~\ref{fig:evaluation:VOC_quantitative_1}  depicts the
results on  the validation set of  the VOC 2012  Pascal dataset in terms  of the
average  intersection over  union (IU)  score  as a  function of  the number  of
modes. When only 1 mode is used,  the result boils down to standard MF inference
as  in~\cite{Chen15b}.   Using 32  yields  a  $2.5\%$  improvement over  the  MF
approximation. This may seem small until one considers that we {\it only} modify
the  algorithm's  inference   engine  and  leave  the   unary  terms  unchanged.
In~\cite{Chen15b,Zheng15},   this   engine   has  been   shown   to   contribute
approximately  $3\%$ to  the overall  performance,  which means  that we  almost
double its effectiveness. For analysis purposes, we implemented two baselines:
\begin{itemize}{\leftmargin=1.25em}
\itemsep0em

  \item Instead of clamping groups of variables, we only clamp the variable with
    the maximum  entropy gap  at each  step.  As  depicted by  the red  curve in
    Fig.~\ref{fig:evaluation:VOC_quantitative_1},   this  has   absolutely  no   effect  and
    illustrates the importance of clamping  groups of variable instead of single
    ones as in~\cite{Weller15}.

  \item The DivMBest approach~\cite{Batra13} first computes a MAP and then adds a
    penalty term  to the energy function  to find another MAP  that is different
    from the first. It then repeats the process. We adapted this approach for MF
    inference. The  green curve in Fig.~\ref{fig:evaluation:VOC_quantitative_1}  depicts the
    result, which MMMF outperforms by $1.5\%$.
\end{itemize}
\begin{figure}[ht!]
\begin{center}
\includegraphics[width=0.9\textwidth]{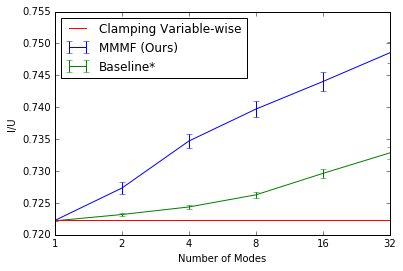} 
IU score for the best mode\\
\end{center}
   \caption{Quantitative semantic segmentation on VOC 2012. IU score for best mode
  as a  function of  the number  of modes. MMMF  in blue,  baselines in  red and
  green.}
\label{fig:evaluation:VOC_quantitative_1}
\end{figure}
\paragraph{Semantic Video Segmentation.}

We  ran  the  same experiment  on  the  images  of  the MPI  video  segmentation
dataset~\cite{Galasso13} using  the CRF of~\cite{Yu16b}.   In this case,  we can
exploit temporal consistency  to avoid having to use an  oracle and nevertheless
get an exploitable result, as we did in Section~\ref{sec:MMPOM}. Furthermore, we
can do this in spite of the relatively low frame-rate of about 1Hz. 

More specifically, we  first define a compatibility  measure between consecutive
modes  based on  label probabilities  of matching  key-points, which  we compute
using a key-point matching algorithm~\cite{Revaud16}. We then compute a shortest
path  over  the   sequence  of  modes,  taking  into   account  individual  mode
probabilities given  by Eq.~\ref{method:eq:mk}.   Finally, we  use only  the MAP
corresponding to the  mode chosen by the shortest path  algorithm to produce the
segmentation. In  Fig.~\ref{fig:evaluation:VOC_quantitative_2}, we again report  the results
in  terms of  IU  score. This  time  the improvement  is  around $2.4\%$,  which
indicates  that imposing  temporal consistency  very substantially  improves the
quality of the inference.
To the best of our knowledge, other state of the art video semantic segmentation
methods  are  not applicable  for  such  image sequences.~\cite{Hur16}  requires
non-moving scenes and a super-pixel  decomposition, which prevents using all the
dense  CRF-based image  segmentors.~\cite{Kundu16}  was only  applied to  street
scenes  and requires  a  much higher  frame  rate to  provide  an accurate  flow
estimation.  
\begin{table}[]
\centering
\label{table:evaluation:IOU}
\begin{tabular}{|l|l|}
\hline
\textbf{Method} & \textbf{Mean IOU} \\ \hline
MF              & 44.9\%            \\ \hline
\cite{Weller15} + Temp & 44.9\%            \\ \hline
MMMF + Temporal & 47.3\%            \\ \hline
MMMF-Best   & 53.2\%              \\ \hline
\end{tabular}
\caption{Quantitative semantic segmentation MPI dataset~\cite{Galasso13}.}
\label{fig:evaluation:VOC_quantitative_2}
\end{table}


\section{Conclusion}

We have shown  that our MMMF aproach  makes it possible to add  structure to the
standard MF approximation of CRFs and  to increase the performance of algorithms
that  depend   on  it.   In   effect,  our  algorithm  creates several  alternative  MF
approximations  with probabilities  assigned to  them, which  effectively models
complex situations in which more than one interpretation is possible.

Since MF  has recently  been integrated  into structured  learning architectures
through the  Back Mean-Field procedure~\cite{Domke13,Li14,Zheng15,Arnab15}, future  work will
aim to replace MF by MMMF in this context as well.

{\small
  \bibliographystyle{ieee}
  \bibliography{string,vision,learning,optim}
}

\cleardoublepage
\begin{appendices}

\section{Proofs for Multi-Modal Mean-Fields via Cardinality-Based Clamping}

This document provides technical details and proofs related to Section 5. We first prove the approximation of the KL-Divergence used in Eq. 9. Then, we show that the problem that we are trying to solve in Eq. 9, the minimization of the KL-Divergence, is actually equivalent to the one solved by~\cite{Weller15}, namely, finding an approximation to the log-partition function. It eventually justifies the benchmark experiments ran in 7.1. Finally, we justify the Gaussian approximation used in the case of large clamping groups in 5.2.1-(2).

\subsection{Minimising the KL-Divergence}
Let us see how the KL-Divergence between $Q_{MM}$ and P of Eq. 3 can be minimised with respect to the parameters $m_k$ and to the distributions $Q_k$, leading to Eq. 9. We reformulate the minimisation problem up to a constant approximation factor of order $\epsilon \log(\epsilon)$.

First, remember that our minimisation problem enforces the near-disjointness condition,
\begin{equation}
\forall k \neq k^{\prime} \sum_{\bx \in \mX_k^{\prime}} Q_{k}(\bx) \leq \epsilon
\;,
\label{eq:MultMeanFieldConstraintEpsilon}
\end{equation}
 between the elements of the mixture. 
 
Let us then prove the following useful Lemma. 
\begin{lemma}
~\label{KL:lemma:epsilon}
For all mixture element $k \leq K$,
\begin{equation}
 \sum \limits_{\bx \in \mX}  Q_{k}(\bx) 
  \log \left( \sum \limits_{k^\prime \leq K}  m_{k^\prime} Q_{k^\prime}(\bx) \right) =  \sum \limits_{\bx \in \mX}  Q_{k}(\bx) 
  \log \left( m_k Q_{k}(\bx) \right) + \mO(\epsilon \log \epsilon)\;.
\end{equation}
\end{lemma}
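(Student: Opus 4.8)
The plan is to split the inner sum over $\bx \in \mX$ according to which subset of the partition $\bx$ belongs to. Write $\mX = \bigsqcup_{k' \le K} \mX_{k'}$. For a fixed mixture element $k$, the left-hand side becomes $\sum_{k' \le K} \sum_{\bx \in \mX_{k'}} Q_k(\bx) \log\left(\sum_{k'' \le K} m_{k''} Q_{k''}(\bx)\right)$. I would treat the diagonal term $k' = k$ and the off-diagonal terms $k' \ne k$ separately. The off-diagonal terms are exactly where the near-disjointness constraint of Eq.~\ref{eq:MultMeanFieldConstraintEpsilon} bites: each $\sum_{\bx \in \mX_{k'}} Q_k(\bx) \le \epsilon$, so the total mass of $Q_k$ lying outside $\mX_k$ is at most $(K-1)\epsilon = \mO(\epsilon)$.

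For the diagonal term, I would argue that on $\mX_k$ the mixture density $\sum_{k''} m_{k''} Q_{k''}(\bx)$ is dominated by its $k''=k$ contribution up to the near-disjoint remainder: $m_k Q_k(\bx) \le \sum_{k''} m_{k''} Q_{k''}(\bx)$ trivially, and the "extra" mass $\sum_{k'' \ne k} m_{k''} Q_{k''}(\bx)$ integrates against $Q_k$ to something $\mO(\epsilon)$-small. Then $\sum_{\bx \in \mX_k} Q_k(\bx)\log\left(\sum_{k''} m_{k''} Q_{k''}(\bx)\right) = \sum_{\bx \in \mX_k} Q_k(\bx)\log\left(m_k Q_k(\bx)\right) + (\text{correction})$, and I want the correction to be $\mO(\epsilon\log\epsilon)$. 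Finally, to convert the diagonal sum $\sum_{\bx \in \mX_k} Q_k(\bx)\log(m_k Q_k(\bx))$ back into the full sum $\sum_{\bx \in \mX} Q_k(\bx)\log(m_k Q_k(\bx))$ claimed on the right-hand side, I again use that $Q_k$ puts all but $\mO(\epsilon)$ of its mass on $\mX_k$; the tail sum $\sum_{\bx \notin \mX_k} Q_k(\bx)\log(m_k Q_k(\bx))$ is a sum of terms of the form $q\log q$ (times $m_k$, whose log contributes $\mO(\epsilon\log m_k)$) over a set of total $Q_k$-mass $\le (K-1)\epsilon$, and $t \mapsto t\log t$ is bounded on $[0,1]$ with $\sum t_j \le (K-1)\epsilon \Rightarrow |\sum t_j \log t_j| = \mO(\epsilon \log \epsilon)$ by concavity/monotonicity of $t\log(1/t)$ near $0$.

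The main obstacle is the bookkeeping around the $\log$: when $Q_k(\bx)$ is tiny, $\log(m_k Q_k(\bx))$ is a large negative number, so one cannot naively bound "error $\le \epsilon \cdot \sup|\log|$." The key estimate to make precise is the elementary inequality that if $t_1,\dots,t_M \ge 0$ with $\sum_j t_j \le \delta \le 1/e$, then $\sum_j t_j \log(1/t_j) \le \delta \log(M/\delta)$ (maximized when mass is spread evenly), which gives the $\mO(\epsilon\log\epsilon)$ rate with $\delta = (K-1)\epsilon$ and $M$ the number of configurations carrying the stray mass — and since $K$ is fixed this is genuinely $\mO(\epsilon\log\epsilon)$. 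A parallel one-sided bound handles the $+\log m_k$ piece and the perturbation of the $\log$ inside the diagonal term (here a first-order expansion $\log(a+b) = \log a + \mO(b/a)$ is risky when $a$ is small, so I would instead bound $0 \le \log\left(\sum_{k''} m_{k''}Q_{k''}(\bx)\right) - \log(m_k Q_k(\bx))$ directly and integrate against $Q_k$, using $\sum_{\bx}Q_k(\bx)\log\frac{\sum_{k''}m_{k''}Q_{k''}(\bx)}{m_kQ_k(\bx)} \le \log \sum_{\bx} Q_k(\bx)\frac{\sum_{k''}m_{k''}Q_{k''}(\bx)}{m_k Q_k(\bx)}$ by Jensen, and the right side is $\log\frac{1}{m_k}\left(m_k + \sum_{k''\ne k} m_{k''} Q_{k''}(\mX_k)\right) \le \log\left(1 + \epsilon/m_k\right) = \mO(\epsilon/m_k)$, absorbed into $\mO(\epsilon\log\epsilon)$ under the standing assumption that the mixture weights are bounded away from $0$). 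Assembling the three pieces yields the Lemma.
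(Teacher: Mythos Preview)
Your proposal follows essentially the same route as the paper: split the error into the piece on $\mX_k$ (controlled via near-disjointness of the other $Q_{k'}$ --- the paper uses $\log(1+x)\le x$ pointwise where you invoke Jensen, which amounts to the same estimate here) and the piece off $\mX_k$ (controlled by the maximum-entropy bound $\sum_j t_j\log(1/t_j)\le\delta\log(M/\delta)$, which the paper states as ``the maximal entropy under $\sum_{\bx\in\mX_{k'}}Q_k(\bx)\le\epsilon$ is reached at the uniform sub-distribution''). The paper writes the error directly as $\delta_k=I+J$, merging your ``off-diagonal LHS'' and ``tail conversion'' into the single term $J$ and thereby saving one step; conversely, your explicit caveat about the $1/m_k$ factor is more careful than the paper's displayed bound for $I$, which silently drops that denominator.
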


\begin{proof}
Let $k$ be the index of a mixture component $k \leq K$, and let us denote the approximation error
\begin{equation}
 \delta_k = \sum \limits_{\bx \in \mX}  Q_{k}(\bx) \log \left( \sum \limits_{k^\prime \leq K}  m_{k^\prime} Q_{k^\prime}(\bx) \right) - \sum \limits_{\bx \in \mX} Q_{k}(\bx) \log \left( m_k Q_{k}(\bx) \right)  \;.
\end{equation}
Then, we use the near-disjointness condition to bound $\delta_k$,
\begin{align}
\delta_k & \leq \underbrace{\sum \limits_{\bx \in \mX_k} Q_{k}(\bx) \log \left( 1 + \dfrac{ \sum \limits_{k^\prime \neq k}  m_{k^\prime} Q_{k^\prime}(\bx)}{Q_{k}(\bx)} \right)}_{I} +  \underbrace{\sum \limits_{\bx \in \mX \setminus \mX_k} Q_{k}(\bx) \log \left( 1 + \dfrac{ \sum \limits_{k^\prime \neq k}  m_{k^\prime} Q_{k^\prime}(\bx)}{Q_{k}(\bx)} \right)}_J
\end{align}
We first use the well known inequality $\log(1 + x) \leq x$ in order to upper bound $I$,
\begin{align}
I &\leq \sum \limits_{\bx \in \mX_k}  Q_{k}(\bx)  \dfrac{ \sum \limits_{k^\prime \neq k}  m_{k^\prime} Q_{k^\prime}(\bx)}{Q_{k}(\bx)} \\
&\leq \sum \limits_{k^\prime \neq k}  \sum \limits_{\bx \in \mX_k}  m_{k^\prime} Q_{k^\prime}(\bx) \\
&\leq \sum \limits_{k^\prime \neq k} \epsilon \\
&\leq \mO (\epsilon) \;.
\end{align}
The second term, $J$, can then be upper-bounded using the fact that the $m_{k^\prime}$ and $Q_{k^\prime}$ are mixture weights and probabilities and hence $\sum \limits_{k^\prime \neq k}  m_{k^\prime} Q_{k^\prime}(\bx) \leq 1$ for all $\bx$. Therefore,
\begin{align}
J &\leq \sum \limits_{\bx \in \mX \setminus \mX_k} Q_{k}(\bx) \log \left( 1 + \dfrac{1}{Q_{k}(\bx)} \right) \\
&\leq \sum \limits_{\bx \in \mX \setminus \mX_k} - Q_{k}(\bx) \log \left( Q_{k}(\bx) \right) \\
&\leq \sum \limits_{k^\prime \neq k} \sum \limits_{\bx \in \mX_{k^\prime}}  - Q_{k}(\bx) \log \left( Q_{k}(\bx) \right) \;.
\end{align}
Furthermore, for all $k^\prime \neq k$, the near-disjointness condition enforces that $\sum \limits_{\bx \in \mX_{k^\prime}}  Q_{k}(\bx) \leq \epsilon$. Under this constraint, on each of the subsets $\mX_{k^\prime}$, the maximal entropy is reached if $Q_{k}(\bx) = \dfrac{\epsilon}{\mid \mX_{k^\prime} \mid}$ for all $\bx$ in $\mX_{k^\prime} $. And, therefore
\begin{align}
\sum \limits_{\bx \in \mX_{k^\prime}}  - Q_{k}(\bx) \log \left( Q_{k}(\bx) \right) &\leq \epsilon \log \left( \dfrac{\mid \mX_k^\prime \mid}{\epsilon} \right)\\
&\leq \mO(\epsilon \log \epsilon) + \mO(\epsilon) \;,
\end{align}
where the factor $\log(\mid \mX_k \mid)$, which is of the order of the number of variables, has been integrated in the constant.

Hence,
\begin{align}
J &\leq \sum \limits_{k^\prime \neq k} \sum \limits_{\bx \in \mX_{k^\prime}}  - Q_{k}(\bx) \log \left( Q_{k}(\bx) \right)\\
& \leq \mO(\epsilon \log \epsilon) + \mO(\epsilon) \;,\\
\end{align}
which terminates the proof.

\end{proof}

We can then move on to the minimisation of the KL-Divergence
\begin{align}
\min \limits_{\hat{m},\hat{q}}\KL(Q_{MM} \| P) &= \min \limits_{\hat{m},\hat{q}} \sum \limits_{\bx \in
  \mX} \sum \limits_{k \leq K} Q_{MM}(\bx) \log \left( \dfrac{Q_{MM}(\bx)}{P(\bx)} \right) \\
 &= \min \limits_{\hat{m},\hat{q}} \sum \limits_{\bx \in
  \mX} \sum \limits_{k \leq K} Q_{MM}(\bx) \log \left( \dfrac{Q_{MM}(\bx)}{e^{-E(\bx)}} \right) + \log(Z) \\
  &=\min \limits_{\hat{m},\hat{q}} \sum \limits_{k \leq K} \sum \limits_{\bx \in \mX}  m_k Q_{k}(\bx) 
  \log \left( \dfrac{\sum \limits_{k^\prime \leq K}  m_{k^\prime} Q_{k^\prime}(\bx)}{e^{-E(\bx)}} \right) + \log(Z) \\ 
    &=\min \limits_{\hat{m},\hat{q}} \sum \limits_{k \leq K} \sum \limits_{\bx \in \mX}  m_k Q_{k}(\bx) 
  \log \left( \dfrac{ m_k Q_{k}(\bx)}{e^{-E(\bx)}} \right) + \log(Z) + \mO(\epsilon \log \epsilon) \label{KL:eq:epsilon}\\
   &=\min \limits_{\hat{m}} \left[ \sum \limits_{k \leq K} m_k \log m_k + \sum \limits_{k \leq K} \min \limits_{q_k} \sum \limits_{\bx \in \mX}  m_k Q_{k}(\bx)  \log \left( \dfrac{ Q_{k}(\bx)}{e^{-E(\bx)}} \right) \right] + \log(Z) + \mO(\epsilon \log \epsilon) \\
      &=  \min \limits_{\hat{m}}  \sum \limits_{k \leq K} m_k\log(m_k) - \sum \limits_{k \leq K} m_k A_k + \log(Z) + \mO(\epsilon \log \epsilon) \; , \label{eq:KL_end}
\end{align}
where,
\begin{align*}
A_k & =  \max  \limits_{q^k_i,{i = 1  \dots N}}
  \sum      \limits_{\bx      \in      \mX}     Q_{k}(\bx)      \log      \left(
  \dfrac{e^{-E(\bx)}}{Q_{k}(\bx)}\right) \;.
  \end{align*}

Equation~\ref{KL:eq:epsilon} is obtained using Lemma~\ref{KL:lemma:epsilon}.

Assuming that we are able to compute $A_k$, for all $k$, the minimisation of this KL-Divergence with respect to parameters $m_k$, under the nomalisation constraint
\begin{equation}
\label{eq:mk}
\sum \limits_{k \leq K} m_k = 1 \;,
\end{equation}
is then straightforward and leads to

\begin{equation}
m_k = \dfrac{e^{A_k}}{\sum \limits_{k^\prime \leq K} e^{A_{k^\prime}}} \;.
\end{equation}

\subsection{Equivalence with the approximation of the partition function.}
The recent work of~\cite{Weller15}, that we use as a baseline, looks for the best heuristic to choose the clamping variables. They measure the quality of the approximation through the closeness of the estimated partition function, which they compute as the sum of MF approximated partition functions for each component of the mixture, to the true one. We will now see that this problem is strictly equivalent to the minimisation of the KL-Divergence of Eq. 9.

Indeed, replacing~\ref{eq:mk} in~\ref{eq:KL_end}, we directly obtain that
\begin{align}
\KL(Q_{MM} \| P)  &= \log(\sum \limits_{k^\prime \leq K} e^{A_{k^\prime}}) + \log(Z) + \mO(\epsilon \log \epsilon) \\
& =\log(Z) - \log(\tilde{Z}) + \mO(\epsilon \log \epsilon) \;,
\end{align}
where,
\begin{equation}
\tilde{Z} = \sum \limits_{k^\prime \leq K} e^{A_{k^\prime}} \;,
\end{equation}
is precisely the approximation of the partition function $Z$ proposed by~\cite{Weller15}. In other terms, it is the sum of local variational lower-bounds on clamped subsets of the state space.

\subsection{Gaussian approximation to the cardinality constraint.}
In the following, we explain the Gaussian approximation of the cardinality constraint used in 5.2.1-(2) and in our application to Semantic Segmentation.
Let us consider the case where  we generate
only  two modes  modelled  by $Q_1(\bx)=\prod  q^1_{i}(x_i)$ and  $Q_2(\bx)=\prod
q^2_{i}(x_i)$ and  we seek to  estimate the $q^1_{i}$  probabilities.
The $q^2_{i}$ probabilities are evaluated similarly.

Recall that, each $A_k$ is obtained through the constrained MF optimisation problem
\begin{equation}
\begin{aligned}
\label{eq:constrainedOptim}
&  \max  \limits_{q^k_i,{i = 1  \dots N}}
& &   \sum      \limits_{\bx      \in      \mX}     Q_{k}(\bx)      \log      \left( \dfrac{e^{-E(\bx)}}{Q_{k}(\bx)}\right)\\
& \text{s.t.} & &  Q_1 \left(\sum  \limits_{u  = 1  \dots  L} \mathbbm{1}(\bX_{i_u} = v_u) < C \right)\leq \epsilon \;.
\end{aligned}
\end{equation}

Under the probability $Q_1$, $\sum  \limits_{u  = 1  \dots  L} \mathbbm{1}(\bX_{i_u} = v_u)$ is a sum of independent binary random variables that are non identically distributed, in other words, a Poisson Binomial Distribution. In the general case, there is no closed-form formula for computing the Cumulative Distribution Function of such a distribution from the individual marginals parametrising $Q_1$. However, when $L$ is large ($\geq 10$), the Gaussian approximation is good enough.

Therefore, we use a Gaussian approximation to replace the cardinality constraint by
\begin{equation}
\sum \limits_{u \in \{1 \dots L\}} q^1_{i_u}(v_u) < C + \sigma F^{-1}(1-\epsilon) \;,
\end{equation}
where $F$ is the Gaussian cumulative distribution fonction and $\sigma^2$ the variance, which, in theory should be
\begin{equation}
\sigma^2 = \sum \limits_{u \in \{1 \dots L\}} q^1_{i_u}(v_u)(1-q^1_{i_u}(v_u))\;,
\end{equation}
but which can be either upper-bounded by $\dfrac{L}{4}$ or re-estimated at the beginning of each Lagrangian iteration.

In short, we replace the untractable higher order constraint~\ref{eq:constrainedOptim}, by a simple one involving only  the sum of the MF parameters $q^1_{i_u}(v_u)$.

\comment{
\section{Computing the critical temperature for the Dense Gaussian CRF.}
Let us assume that the probability distribution $P$ is defined by a dense Gaussian conditional random field~\cite{Kraehenbuehl13}. Furthermore, we put ourselves in the case where we have only two possible lables and similar unary potential values for these on all the variables. Formally, on a $N \times N$ dense grid, the energy function is defined as
\begin{align*}
E(\bx) & =  \dfrac{\Gamma}{2\pi \sigma^2} \sum \limits_{(i,j),(i^\prime,j^\prime)}  \mathbbm{1}[\bx_{(i,j)} \neq \bx_{(i^\prime,j^\prime)} ]  e^{\dfrac{\|(i,j)-(i^\prime,j^\prime)\|^2}{2 \sigma}} \\
& +  \sum \limits_{(i,j)} U_{(i,j)} \mathbbm{1}[\bx_{(i,j)} =0]\;,
\end{align*}
where $\sigma$ controls the range of the correlations and $U_{(i,j)}$ is a unary potential.

In order to exhibit analytically the phase transition, let us assume that all the unaries have the same value $U$ and the mean-field parameters $q_{i,j} = Q(\bx_{i,j} = 0 )$ have the same values during the iterates.  This is a reasonable assumption since that the grid is large and the variables are therefore locally undiscernibles. Therefore, we designate this common parameter $q^T$ and we can try to find analytically the Mean-Field fixed point for $q^T$ corresponding to a temperature $T$. 

At convergence, the parameter $q^T$ will have to verify
\begin{align*}
\log(q^T) &= \mathbb{E}_Q(E(\bx) | x_i =0) \\
& =   - \dfrac{\Gamma}{2\pi \sigma^2T} \sum \limits_{(i,j) \in \mathbb{Z} \times \mathbb{Z}}  (1-q) e^{\dfrac{\|(i,j)\|^2}{2 \sigma}} - \dfrac{U}{T}\\
& = - \dfrac{(1-q^T)\Gamma + U }{T}
\end{align*}
Hence we obtain the fixed point equation
\begin{equation}
\label{method:eq:fixed_point}
\tilde{q}^T= \dfrac{1}{2}  \tanh \left({\dfrac{\tilde{q}^T\Gamma -U }{T}} \right)\;,
\end{equation}
where $\tilde{q}^T = q^T -0.5$. As depicted in Figure~\ref{fig:method:tanh}, when unaries are 0 (on the left) there are two distinct regimes for the solutions of this equation. For high $T$, there is only one stable solution at $\tilde{q} =0$. For low $T$, there are two distinct solutions where $\tilde{q} $ is close to $-0.5$ or $0.5$. The temperature threshold $T_c$ where the transition happens, corresponds to the solution of
\begin{equation}
\dfrac{1}{2}  \dfrac{d \tanh(\dfrac{\tilde{q}\Gamma}{T})}{d \tilde{q}}|_{\tilde{q}=0} = 1 \;,
 \end{equation}
and hence $T_c = \dfrac{\Gamma}{2}$. When unaries are non-zero, there is no closed form solution for $T_c$, however, from Equation~\ref{method:eq:fixed_point}, we can show that the stronger the correlations ($\Gamma$) and the smaller the unaries ($U$), the lower the critical temperature will be.~\cite{Weller15} uses several heuristics which basically consist in looking for high correlations and low unaries directly in the potentials of the graphical model, in order to find good variables to clamp. We, instead use a criterium based on the critical temperature in order to spot these.

\begin{figure}[ht!]
\begin{center}
\begin{tabular}{@{}cc}
\includegraphics[width=0.5\textwidth]{{illustrations/tanh}.pdf} &
\includegraphics[width=0.5\textwidth]{{illustrations/tanh_strong_unaries}.pdf} \\
Without unaries & With unaries
\end{tabular}
\end{center}
   \caption{ $\tanh(\dfrac{\tilde{q}\Gamma - U}{T})$ for two temperatures. Low T (blue) and High T (red).}
\label{fig:method:tanh}
\end{figure}


We use the dense CRF implementation of~\cite{Kraehenbuehl13} to verify the phase transition experimentally for $\Gamma =10$. In the our experiments, we used the three following settings, which range from the stylised example used for calculation to real semantic segmentation problems:
\begin{itemize}
\item {\bf Model 1: } Gaussian potentials defined only over image coordinates distance. Two classes without unary potentials. This is exactly the model used for the derivations. 
\item {\bf Model 2: } Gaussian potentials defined over image coordinates distance + RGB distance. Two classes without unary potentials. We added an RGB kernel to the previous model.
\item {\bf Model 3: }Gaussian potentials defined over image coordinates distance + RGB distance. Two classes with unary potentials produced by a CNN. This is real-life scenario.
\end{itemize}

Fig.~\ref{fig:graph_transition}, shows that, as expected two regimes appear before and after $T = 5$. We see that our prediction remains completely valid for Model 2. For Model 3, we see that the minimal and average entropy remain low even for $T > 5$. This is well explained by the fact that large regions of the image receive strong unary potentials from one class or the other, and therefore fall under the case "with unaries" of Fig.~\ref{fig:method:tanh}. However, some uncertain regions receive unary potentials of same value for both labels, and therefore undergo a phase transition as predicted by our calculation. That is why the maximal entropy behaves similarly to Model 2. Our algorithm precisely targets these uncertain regions.

Interestingly, we see that in practice, the users of DenseCRF choose the $\Gamma$ and $T$ parameters in order to be in a Multi-Modal regime, but close to the phase transition. For instance in the public releases of~\cite{Chen15b} and ~\cite{Zheng15}, the Gaussian kernel is set with $T = 1$ and $\Gamma = 3$.
\PB{Explain why this is an upper-bound.}

\begin{figure}[ht!]
\begin{center}
\begin{tabular}{@{}ccc}
\includegraphics[width=0.3\textwidth]{{illustrations/graph_transition_naked}.pdf} &
\includegraphics[width=0.3\textwidth]{{illustrations/graph_transition}.pdf} &
\includegraphics[width=0.3\textwidth]{{illustrations/graph_transition_full}.pdf} \\
Model 1 & Model 2 & Model 3
\end{tabular}
\end{center}
   \caption{ Entropy as a function of temperature.}
\label{fig:graph_transition}
\end{figure}

\begin{figure}[ht!]
\begin{center}
\includegraphics[width=0.90\textwidth]{{illustrations/phase_transition_naked}.pdf} \\
Model 1 \\
\includegraphics[width=0.90\textwidth]{{illustrations/phase_transition}.pdf} \\
Model 2 \\
\includegraphics[width=0.90\textwidth]{{illustrations/phase_transition_full}.pdf} \\
Model 3 \\
\end{center}
\caption{Evolution of MF probability for label 1 when temperature increases}
\label{fig:MMKSP}
\end{figure}

\section{K-Shortest Path algorithm for the Multi-Modal Probabilistic Occupancy Maps.}

Let us explain in more details the algorithm that is used to reconstruct tracks from consecutive Multi-Modal Probabilistic Occupancy Maps (MMPOM), obtained with the MMMF algorithm.

\paragraph{KSP}  
In  the original  algorithm, the  POMs
computed at  successive instants  were used  to produce  consistent trajectories
using  the a  K-Shortest  Path (KSP)  algorithm~\cite{Berclaz11}. This  involves
building a graph in which each ground  location at each time step corresponds to
a node and  neighboring locations at consecutive time steps  are connected.  KSP
then finds a set of node-disjoint shortest paths in this graph where the cost of
going through a location is proportional  to the negative log-probability of the
location in  the POM~\cite{Suurballe74}.  The KSP problem can be solved in linear
time and the authors of~\cite{Berclaz11}, provide a very efficient implementation of
this algorithm.

\paragraph{KSP for Multi-Modal POM} Since MMMF produces  multiple POMs, one
for each  mode, at each  time-step, we duplicate the  KSP graph nodes,  once for
each mode.   Each node is then  connected to each copy  of neighboring locations
from previous and  following time steps. We then solve  a multiple shortest-path
problem in this new graph, with the additional constraint that at each time step
all the paths have  to go through copies of the nodes  corresponding to the same
mode.   This larger  problem is  NP-Hard and  cannot be  solved by  a polynomial
algorithm such as  KSP.  We therefore use Gurobi, a Mixed-Integer  Linear solver.

More precisely, let us assume that we have a sequence of Multi-Modal POMs $\{Q^t_{k}\}$ and mode probabilities $\{m^t_k\}$ for $t \in \{1,\dots,T\}$ representing time-steps and $k \in \{1,\dots,K\}$ representing different modes. Each $\{Q^t_{k}\}$ is materialised through a vector of probabilities of presence $q^t_{k,i}$, where each $i \leq N$ is indexes a location on the tracking grid.

Using the grid topology, we define a neighborhood around each variable, which corresponds to the maximal distance a walking person can make on a grid in one time step. Let us denote by $\mN_i$ the set of indices corresponding to locations in the neighbourhood of $i$. The topology is fixed and hence does not depend on the time steps.

Using a Log-Likelihood penalty, we define the following costs:
\begin{itemize}
\item $C^t_{k,i} = \log \left(\dfrac{1 - q^t_{k,i}}{q^t_{k,i}} \right)$, representing the cost of going through variable $i$ at time $t$ if mode $m$ is chosen.
\item $C^t_{m} = \log \left(\dfrac{1 - m^t_k}{m^t_k} \right)$, representing the cost of choosing mode $m$ at time $t$.
\end{itemize}

We solve for an optimization problem involving the following variables
\begin{itemize}
\item $x^{t}_{k,i,l,j}$ is a binary flow variable that should be $1$ if a person was located in $i$ at $t$ and moved to $j$ at $t+1$, while modes $k$ and $l$ were respectively chosen at time $t$ and $t+1$.
\item $y^t_k$ is a binary variable that indicates whether mode $k$ is selected at time $t$.
\end{itemize}

These definitions being given, we can easily write the Multi-Modal K-Shortest Path problem as the following program, were we always assume that $t \leq T$ stands for a time step, $k \leq K$ and  $l \leq K$ stand for mode indices, and $i \leq N$ and  $j \leq N$ stand for grid locations:

\begin{equation}
\begin{aligned}
\label{eq:MILP}
&  \min  
& &   \sum \limits_{t , k } C^t_k  y^t_k + \sum \limits_{t , k,  l \leq K}  \sum \limits_{i, j \in \mN_i}  C^t_{k,i} x^{t}_{k,i,l,j}\\
& \text{s.t.} & &  \forall (t , k, i)\;, \; \; \sum \limits_{l, j\in \mN_i} x^{t-1}_{l,j,k,i} = \sum \limits_{l, j\in \mN_i} x^{t}_{k,i,l,j} & &\texttt{ flow conservation}\; \\
& \text{ } & &  \forall (t , k, i)\;, \; \;\sum \limits_{l, j\in \mN_i} x^{t}_{k,i,l,j} \leq y^t_k & & \texttt{ disjoint paths + selected mode}\; \\
& \text{ } & &  \forall t \;,\; \; \sum \limits_{k} y^{t}_{k} = 1 & & \texttt{ selecting one mode}\;\\
& \text{ } & &  \forall t,k,i,l,j  \;,\; \;  0 \leq x^{t}_{k,i,l,j} \leq 1\;\\
& \text{ } & &  \forall t,k \;,\; \;  y^{t}_{k}  \in \{0,1\}\;
\end{aligned}
\end{equation}

\begin{figure}[ht!]
\begin{center}
\includegraphics[width=0.90\textwidth]{{illustrations/general}.pdf} \\
\end{center}
\caption{Illustration of the output of our K-Shortest Path algorithm in the case of multiple modes.}
\label{fig:MMKSP}
\end{figure}

\paragraph{KSP prunning}
However, the problem as written above, may involve several tens millions of flow variables and therefore becomes intractable, even for the best MILP solvers. We hence need to use a pruning strategy in order to reduce drastically its size.

The obvious strategy would be by thresholding the POMs and removing all the outgoing and incoming edges from locations which have probabilities below $q_{thresh}$. However, this would be self-defeating as one of the main strengths of the KSP formulation is to be very robust to missing-detections and be able to reconstruct a track even if a detection is completely lost for several frames. 

We therefore resort to a different strategy. More precisely, in a first time, we relax the constraint \texttt{ disjoint paths + selected mode}, to a simple disjoint path constraint, and remove the constraint \texttt{ selecting one mode}. We therefore obtain a relaxed problem
\begin{equation}
\begin{aligned}
\label{eq:MILP_relax}
&  \min  
& &   \sum \limits_{t , k }  \sum \limits_{t , k,  l \leq K}  \sum \limits_{i, j \in \mN_i}  C^t_{k,i} x^{t}_{k,i,l,j}\\
& \text{s.t.} & &  \forall (t , k, i)\;, \; \; \sum \limits_{l, j\in \mN_i} x^{t-1}_{l,j,k,i} = \sum \limits_{l, j\in \mN_i} x^{t}_{k,i,l,j} & &\texttt{ flow conservation}\; \\
& \text{ } & &  \forall (t , k, i)\;, \; \;\sum \limits_{l, j\in \mN_i} x^{t}_{k,i,l,j} \leq1 & & \texttt{ disjoint paths}\; \\  
& \text{ } & &  \forall t,k,i,l,j  \;,\; \;  0 \leq x^{t}_{k,i,l,j} \leq 1\;
\end{aligned}
\end{equation}

which is nothing but a vanilla K-Shortest Path Problem. It can be solved using our linear-time KSP algorithm. This KSP problem will output a very large number of paths, going through all the different modes simultaneously. From, this output, we extract the set of grid locations which are used, in any mode, at each time step, and select them as our potential locations in the final program. In our current implementation, we add to these locations, the ones for which $q^t_{k,i} \geq q_{thresh}$ for any mode at time-step $t$, and we have not evaluated the impact of removing these. 

We can finally solve Program~\ref{eq:MILP}, where non-selected locations are pruned from the flow graph. We don't know if our strategy, based on a relaxation and pruning, provides a guaranteed optimal solution to~\ref{eq:MILP}, but this is an interesting question.

}


\comment{
\section{LB partition function}
In order to get more intuition about the meaning of $A_k$ and $m_k$, let us look at the task of approximating the partition function of an MRF. Here, let us assume the $P$ is defined through an energy function $E$ as $P(X)  = \dfrac{e^{-E(X)}}{Z}$. Z is unknown and called the partition function. The Multi-Modal Mean-Fields also  provides a way to get a valid variational lower bound to $Z$, which is always better than the naive Mean-Fields one. Both naive and multi-modal mean-fields provide lower bounds to $Z$. \\ 

Now, let us see how to obtain a lower bound for the partition function. This is an extension of Domke.

\begin{align*}
Z & = \sum \limits_{X \in \mX} e^{-E(X)}\\
& = \sum \limits_{k \leq M} exp[\log(\sum \limits_{X \in \mX_k}  e^{-E(X)})] \\ 
& \geq \max \sum \limits_{k \leq M} exp[\sum \limits_{X \in \mX_k} Q_k(X) [ -\log(Q_k(X))-E(X)]] \\
& \geq \sum \limits_{k \leq M} Z_k
\end{align*}

Each $Z_k$ is a Mean-Field mode approximation to the partition function. For a given energy function, we will see that they can be computed. 

\begin{align*}
Z_k & = exp[\max{\sum \limits_{X \in \mX_k} Q_k(X) [ -\log(Q_k(X))-E(X)]}]\\
 & = exp[\sum \limits_{X \in \mX_k} \min [Q_k(X) \log(\dfrac{Q_k(X)}{P(X)})] + log(Z)] \\
 & = Z e^{A_k}
\end{align*}

Therefore,  $m_k = \dfrac{e^{A_k}}{\sum \limits_{k^\prime \leq M} e^{A_{k^\prime}}} =  \dfrac{Z_k}{\sum \limits_{k^\prime \leq M} Z_{k^\prime}}$, is an approximation of the relative weight of mode $k$ in the original partition function.

\section{LB $A_k$, the MF lower bound for each mode}
We see that, for each mode $k$, the term $ \sum \limits_{X \in \mX_k} Q_{k}(X) \log \left( \dfrac{e^{-E(X)}}{Q_{k}(X)}\right) $ , corresponds to a variational lower bound for $P(X)$, restricted to $\mX_k$ and we call $A_k$ its maximum. We will now see how this can be evaluated and maximized. 

\begin{align*}
A_k
&= \max \limits_{\sum \limits_{X \in \mX_k} \prod \limits_{i \leq N} q_{i,k}(X_i) =1 }\sum \limits_{X \in \mX_k} \prod \limits_{i \leq N} q_{i,k}(X_i) [ -\log(\prod \limits_{i \leq N} q_{i,k}(X_i))-E(X)]\\
&\geq \max \limits_{\substack{\sum \limits_{X} \prod \limits_{i \leq N} q_{i,k}(X_i) =1 \\ \forall X \in \mX \setminus \mX_k \prod \limits_{i \leq N} q_{i,k}(X_i) = 0}} \sum \limits_{X} \prod \limits_{i \leq N} q_{i,k}(X_i) [ -\log(\prod \limits_{i \leq N} q_{i,k}(X_k))-E(X)] \\
&\geq \max \limits_{\substack{ \forall i \leq N, \sum \limits_{X_i} q_{i,k}(X_i) =1 \\ \forall X \in \mX \setminus \mX_k \prod \limits_{i \leq N} q_{i,k}(X_i) = 0}} \sum \limits_{X} \prod \limits_{i \leq N} q_{i,k}(X_i) [ -\log(\prod \limits_{i \leq N} q_{i,k}(X_k))-E(X)] \label{clamped_lb_general}
\end{align*}

Let us examine what this means when we have only two modes, a binary MRF and that the split depends only on one value of index $N$ for simplicity. We mean $\mX_k = \{X \in \mX \text{ s.t } X_N=0\}$ and $\bar{\mX_k}= \{X \in \mX \text{ s.t } X_N=1\}$. ~\ref{clamped_lb_general} can be simplified to:


\begin{align*}
A_k & \geq \max \limits_{\substack{ \forall i \leq N, \sum \limits_{X_i} q_{i,k}(X_i) =1  \\ \forall X \in \mX \setminus \mX_0 \prod \limits_{i \leq N} q_{i,k}(X_i) = 0}}\sum \limits_{X} \prod \limits_{i \leq N} q_{i,k}(X_i) [ -\log(\prod \limits_{i \leq N} q_{i,k}(X_k))-E(X)] \\
& = \max \limits_{\substack{ \forall i \leq N, \sum \limits_{X_i} q_{i,k}(X_i) =1  \\ q_{N,k}(1) =0 }} \sum \limits_{X} \prod \limits_{i \leq N} q_{i,k}(X_i) [ -\log(\prod \limits_{i \leq N} q_{i,k}(X_k))-E(X)] \\
& = \max \limits_{ \forall i \leq N-1, \sum \limits_{X_i} q_{i,k}(X_i) =1  } \sum \limits_{X_{-N}} \prod \limits_{i \leq N-1} q_{i,k}(X_i) [ -\log(\prod \limits_{i \leq N} q_{i,k}(X_k))-E(X_{-N},X_N =0)] 
\end{align*}

Which is equivalent to a "clamped" mean-field, where variable $N$ is clamped to 0.

When the subset $\mX_k$ of $\mX$ is not directly a "clamped" subset, the maximisation above is not always tractable. However, we show that the Mean-Field maximization $\label{clamped_lb_general}$ can be performed efficiently for a large class of splitting patterns. We define a extension of "clamping", which we call "cardinality clamping". Let us first assume for simplicity that the state space $\mX$ is split in only two classes $\mX_k$ and $\bar{\mX_k}$. This split is "cardinality-based" if
\begin{align*}
& \exists i_1,\dots,i_L, \exists v_1,\dots,v_L, \exists C \in \mathbbm{N}\\
 & \text{ s.t. } \\
 & \mX_k = \{X \in \mX \text{ s.t. } \sum \limits_{u = 1 \dots L} \mathbbm{1}(X_{i_u} = v_u) \geq C \} \\
 & and \\
  & \bar{\mX_k} =  \{X \in \mX \text{ s.t. } \sum \limits_{u = 1 \dots L} \mathbbm{1}(X_{i_u} = v_u) < C \} \;.
\end{align*}

Let us assume that
\begin{equation}
 \mX_k = \{X \in \mX \text{ s.t. } \sum \limits_{u = 1 \dots L} \mathbbm{1}(X_{i_u} = v_u) \geq C \}
\end{equation}  
We can reformulate  the second constraint 
\begin{equation}
\forall X \in \mX \setminus \mX_k \prod \limits_{i \leq N} q_{i,k}(X_i) = 0
\end{equation}

of optimisation in eq.~\ref{clamped_lb_general} as 
\begin{equation}
\forall X \text{ s.t. } \sum \limits_{u = 1 \dots L} \mathbbm{1}(X_{i_u} = v_u) < C, \; \prod \limits_{i \leq N} q_{i,k}(X_i) = 0 \;.
\end{equation}
Equivalently,
\begin{equation}
\forall X \text{ s.t. } \sum \limits_{u = 1 \dots L} \mathbbm{1}(X_{i_u} \neq v_u) \geq L -C, \; \prod \limits_{i \leq N} q_{i,k}(X_i) = 0 \;.
\end{equation}
Or, in other terms,
\begin{align*}
&Card(\{u \in \{1 \dots l\} \text{ s.t. } \exists w_u \neq v_u, q_{i_u,k}(w_u) > 0\}) < L - C \\
&Card(\{u \in \{1 \dots l\} \text{ s.t. } \forall w_u \neq v_u, q_{i_u,k}(w_u) =  0\}) \geq C \\
&Card(\{u \in \{1 \dots l\} \text{ s.t. } q_{i_u,k}(v_u) =  1\}) \geq C \;.
\end{align*}

Conversely, let us look at $\bar{\mX_k}$
\begin{equation}
 \bar{\mX_k} = \{X \in \mX \text{ s.t. } \sum \limits_{u = 1 \dots L} \mathbbm{1}(X_{i_u} = v_u) < C \}
\end{equation}  

Then, we can reformulate  the second constraint 
\begin{equation}
\forall X \in \mX \setminus \bar{\mX_k} \prod \limits_{i \leq N} q_{i,k}(X_i) = 0
\end{equation}

of optimisation in eq.~\ref{clamped_lb_general} as 
\begin{equation}
\forall X \text{ s.t. } \sum \limits_{u = 1 \dots L} \mathbbm{1}(X_{i_u} = v_u) \geq C, \; \prod \limits_{i \leq N} q_{i,k}(X_i) = 0 \;.
\end{equation}
Equivalently,

\begin{align*}
&Card(\{u \in \{1 \dots l\} \text{ s.t. } q_{i_u,k}(v_u) > 0\}) \leq C \\
&Card(\{u \in \{1 \dots l\} \text{ s.t. } q_{i_u,k}(v_u) = 0\}) > L - C \;.
\end{align*}

\section{Solving the constrained optimisation problem}
Computing each mode requires to find the solution to the constrained optimisation problem 

\begin{equation}
A_k  \geq \max \limits_{\substack{ \forall i \leq N, \sum \limits_{X_i} q_{i,k}(X_i) =1  \\ Card(\{u \in \{1 \dots l\} \text{ s.t. } q_{i_u,k}(v_u) =  1\}) \geq C}}\sum \limits_{X} \prod \limits_{i \leq N} q_{i,k}(X_i) [ -\log(\prod \limits_{i \leq N} q_{i,k}(X_k))-E(X)] \\
\label{method:minimization_problem}
\end{equation}

To be completed
}

\section{Computing the Critical Temperature for the Dense Gaussian CRFs}

We first compute analytically the phase transition temperature parameter $T_c$ of 6.2 where the KL-Divergence stops being convex. In the first part {\it Analytical Derivation}, we make strong assumptions in order to be able to obtain a closed form estimation of $T_c$. We then explain how this result helps understanding real cases. In the second part {\it Experimental Analysis}, in order to justify our assumptions, we run experiments under three regimes, one where our assumptions are strictly verified, one which corresponds to a real-life scenario and an intermediate one. This set of experiments shows that our strong assumptions provide a valuable insight for practical applications.

\subsection{Analytical derivation}

Let us take probability distribution $P$ to be defined by a dense Gaussian CRF~\cite{Kraehenbuehl13}. In order to make computation tractable, we assume that the RGB distance between pixels is uniform and equal to $d_{rgb}$. Therefore the RGB Kernel is constant with value 
\begin{equation}
\theta_{rgb} = e^{\dfrac{-d_{rgb}^2}{2 \sigma_{rgb}}} \;.
\end{equation}

We consider the case where we have only two possible labels and the same unary potential on all the variables. Even if this assumption sounds strong, we can expect them to be locally valid. Formally, on a $N \times N$ dense grid, the energy function is defined as
\begin{align*}
E(\bx) & =  \dfrac{\Gamma \theta_{rgb} }{2\pi \sigma^2} \sum \limits_{(i,j),(i^\prime,j^\prime)}  \mathbbm{1}[\bx_{(i,j)} \neq \bx_{(i^\prime,j^\prime)} ]  e^{-\dfrac{\|(i,j)-(i^\prime,j^\prime)\|^2}{2 \sigma}} \\
& +  \sum \limits_{(i,j)} U_{(i,j)} \mathbbm{1}[\bx_{(i,j)} =0]\;,
\end{align*}
where $\sigma$ controls the range of the correlations and $U_{(i,j)}$ is a unary potential.

Since that we assumed that all the variables receive the same unary $U$, all the variables are undiscernibles.  Furthermore, the pairwise potentials are attractive, we  therefore expect all the mean-field parameters $q_{i,j} = Q(\bx_{i,j} = 0 )$ to have the same value at the fixed point solution of the Mean-Field. Therefore, we designate this common parameter $q^T$ and we can try to find analytically the Mean-Field fixed point for $q^T$ corresponding to a temperature $T$. 

At convergence, the parameter $q^T$ will have to satisfy
\begin{align*}
\log(q^T) &= \mathbb{E}_Q(E(\bx) | x_i =0) \\
& =   - \dfrac{\Gamma \theta_{rgb} }{2\pi \sigma^2T} \sum \limits_{(i,j) \in \mathbb{Z} \times \mathbb{Z}}  (1-q^T) e^{-\dfrac{\|(i,j)\|^2}{2 \sigma}} - \dfrac{U}{T}\\
& = - \dfrac{(1-q^T)\Gamma \theta_{rgb}  + U }{T}
\end{align*}
Hence, we obtain the fixed point equation
\begin{equation}
\label{method:eq:fixed_point}
\tilde{q}^T= \dfrac{1}{2}  \tanh \left({\dfrac{\tilde{q}^T\Gamma \theta_{rgb}  -U }{T}} \right)\;,
\end{equation}
where $\tilde{q}^T = q^T -0.5$. As depicted in Figure~\ref{fig:method:tanh}, when unaries are 0 (on the left) there are two distinct regimes for the solutions of this equation. For high $T$, there is only one stable solution at $\tilde{q} =0$. For low $T$, there are two distinct stable solutions where $\tilde{q} $ is close to $-0.5$ or $0.5$. The temperature threshold $T_c$ where the transition happens, corresponds to the solution of
\begin{equation}
\dfrac{1}{2}  \dfrac{d \tanh(\dfrac{\tilde{q}\Gamma \theta_{rgb} }{T})}{d \tilde{q}}|_{\tilde{q}=0} = 1 \;,
 \end{equation}
and hence $T_c = \dfrac{\Gamma \theta_{rgb} }{2}$. For real images, we have $\theta_{rgb} \leq 1$, and therefore, $T_c = \dfrac{\Gamma }{2}$  can be used to upper-bound the true critical temperature.

When unaries are non-zero, there is no closed form solution for $T_c$, however, from Equation~\ref{method:eq:fixed_point}, we can show that the smaller the unaries ($U$), the lower the critical temperature will be. This is intuitively justified in Fig.~\ref{fig:method:tanh}. 

The authors of~\cite{Weller15}, use several heuristics which basically consist in looking for high correlations and low unaries directly in the potentials of the graphical model, in order to find good variables to clamp. We, instead use a criterium based on the critical temperature in order to spot these.

\begin{figure}[ht!]
\begin{center}
\begin{tabular}{@{}cc}
\includegraphics[width=0.5\textwidth]{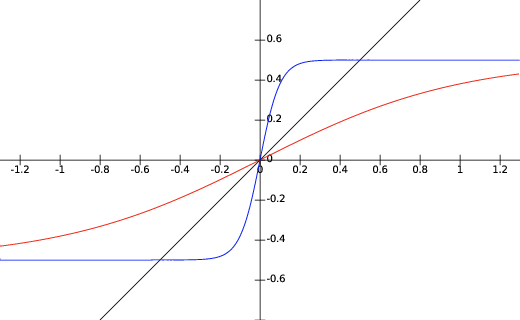} &
\includegraphics[width=0.5\textwidth]{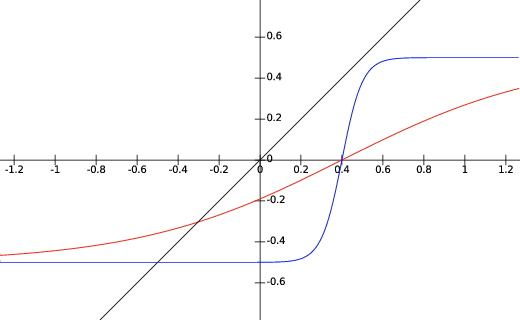} \\
Without unaries & With unaries
\end{tabular}
\end{center}
   \caption{ $\tanh(\dfrac{\tilde{q}\Gamma - U}{T})$ for two temperatures. Low T (blue) and High T (red).}
\label{fig:method:tanh}
\end{figure}


\subsection{Experimental analysis}

We use the dense CRF implementation of~\cite{Kraehenbuehl13} to verify the phase transition experimentally for $\Gamma =10$. In our experiments, we used the three following settings, which range from the stylised example used for calculation to real semantic segmentation problems:
\begin{itemize}
\item {\bf Model 1: } We use a uniform rgb image $d_{rgb} = 0$. Two classes without unary potentials. This is exactly the model used for the derivations with $\theta_{rgb} = 1$ and $U = 0$. 
\item {\bf Model 2: } Gaussian potentials defined over image coordinates distance + RGB distance. Two classes without unary potentials. In other words, $\theta_{rgb} \leq 1$. 
\item {\bf Model 3: }Gaussian potentials defined over image coordinates distance + RGB distance. Two classes with unary potentials produced by a CNN. This is a real-life scenario.
\end{itemize}

Fig.~\ref{fig:graph_transition} shows that, as expected, two regimes appear for Model 1, before and after $T = 5$. We see that our prediction remains completely valid for Model 2, some non-uniform regions fall under the regime $\theta_{rgb} \leq 1$ and therefore the 10 \% highest entropy percentile transitions slightly earlier. For Model 3, however, we see that the minimal and average entropy remain low even for $T > 5$. This is well explained by the fact that large regions of the image receive strong unary potentials from one class or the other, and therefore fall under the case "with unaries" of Fig.~\ref{fig:method:tanh} where the $U$ parameter cannot be ignored. However, some uncertain regions receive unary potentials of same value for both labels, and therefore undergo a phase transition as predicted by our calculation. That is why the maximal entropy behaves similarly to Model 2. Our algorithm precisely targets these uncertain regions.

Interestingly, we see that in practice, the users of DenseCRF choose the $\Gamma$ and $T$ parameters in order to be in a Multi-Modal regime, but close to the phase transition. For instance in the public releases of~\cite{Chen15b} and ~\cite{Zheng15}, the Gaussian kernel is set with $T = 1$ and $\Gamma = 3$.
\PB{Explain why this is an upper-bound.}

\begin{figure}[ht!]
\begin{center}
\begin{tabular}{@{}ccc}
\includegraphics[width=0.4\textwidth]{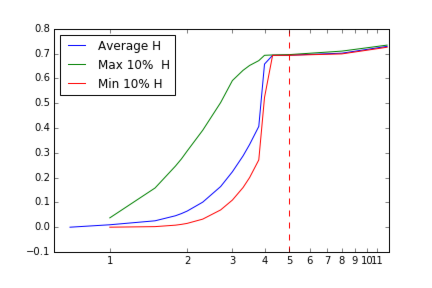} &
\includegraphics[width=0.4\textwidth]{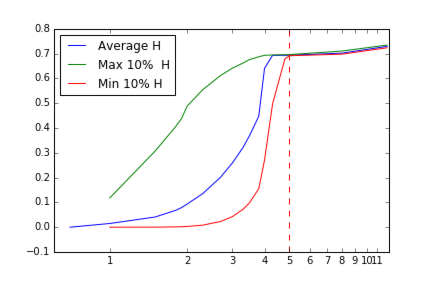} &
\includegraphics[width=0.4\textwidth]{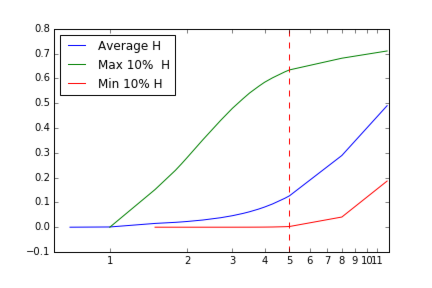} \\
Model 1 & Model 2 & Model 3
\end{tabular}
\end{center}
   \caption{ Entropy as a function of temperature.}
\label{fig:graph_transition}
\end{figure}

\begin{figure}[ht!]
\begin{center}
\includegraphics[width=0.3\textwidth]{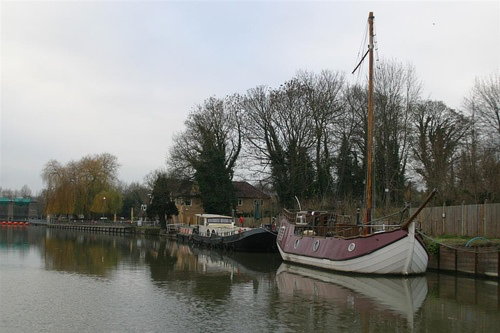} \\
RGB Image \\
\includegraphics[width=1.2\textwidth]{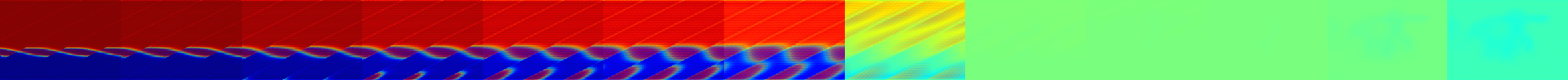} \\
Model 1 \\
\includegraphics[width=1.2\textwidth]{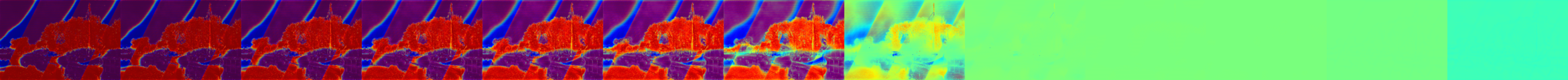} \\
Model 2 \\
\includegraphics[width=1.2\textwidth]{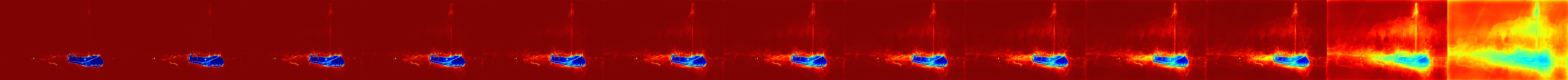} \\
Model 3 \\
\end{center}
\caption{Evolution of MF probability for background label when temperature increases}
\label{fig:MMKSP}
\end{figure}

\section{K-Shortest Path algorithm for the Multi-Modal Probabilistic Occupancy Maps}

We present here the algorithm we use to reconstruct tracks from the Multi-Modal Probabilistic Occupancy Maps (MMPOMs) of Section 7.2.

\paragraph{KSP}  
In  the original  algorithm of~\cite{Berclaz11}, the  POMs
computed at  successive instants  were used  to produce  consistent trajectories
using  the a  K-Shortest  Path (KSP)  algorithm~\cite{Suurballe74}. This  involves
building a graph in which each ground  location at each time step corresponds to
a node and  neighboring locations at consecutive time steps  are connected.  KSP
then finds a set of node-disjoint shortest paths in this graph where the cost of
going through a location is proportional  to the negative log-probability of the
location in  the POM~\cite{Berclaz11}. The KSP problem can be solved in linear time 
and an efficient implementation is available online.

\paragraph{KSP for Multi-Modal POM} Since MMMF produces  multiple POMs, one
for each  mode, at each  time-step, we duplicate the  KSP graph nodes,  once for
each mode as well.   Each node is then  connected to each copy  of neighboring locations
from previous and  following time steps. We then solve  a multiple shortest-path
problem in this new graph, with the additional constraint that at each time step
all the paths have  to go through copies of the nodes  corresponding to the same
mode.   This larger  problem is  NP-Hard and  cannot be  solved by  a polynomial
algorithm such as  KSP.  We therefore use the Gurobi Mixed-Integer Linear Program solver~\cite{Gurobi}.

More precisely, let us assume that we have a sequence of Multi-Modal POMs $Q^t_{k}$ and mode probabilities $m^t_k$ for $t \in \{1,\dots,T\}$ representing time-steps and $k \in \{1,\dots,K\}$ representing different modes. Each $Q^t_{k}$ is materialized through a vector of probabilities of presence $q^t_{k,i}$, where each $i \leq N$ is indexes a location on the tracking grid.

Using the grid topology, we define a neighborhood around each variable, which corresponds to the maximal distance a walking person can make on a grid in one time step. Let us denote by $\mN_i$ the set of indices corresponding to locations in the neighbourhood of $i$. The topology is fixed and hence $\mN_i$ does not depend on the time steps. We define the following log-likelihood costs.

Using a Log-Likelihood penalty, we define the following costs:
\begin{itemize}
\item $C^t_{k,i} = \log \left(\dfrac{1 - q^t_{k,i}}{q^t_{k,i}} \right)$, representing the cost of going through variable $i$ at time $t$ if mode $k$ is chosen.
\item $C^t_{k} = \log \left(\dfrac{1 - m^t_k}{m^t_k} \right)$, representing the cost of choosing mode $k$ at time $t$.
\end{itemize}

We solve for an optimization problem involving the following variables:
\begin{itemize}
\item $x^{t}_{k,i,l,j}$ is a binary flow variable that should be $1$ if a person was located in $i$ at $t$ and moved to $j$ at $t+1$, while modes $k$ and $l$ were respectively chosen at time $t$ and $t+1$.
\item $y^t_k$ is a binary variable that indicates whether mode $k$ is selected at time $t$.
\end{itemize}

We can then rewrite the Multi-Modal K-Shortest Path problem as the following program, were we always assume that $t \leq T$ stands for a time step, $k \leq K$ and  $l \leq K$ stand for mode indices, and $i \leq N$ and  $j \leq N$ stand for grid locations:

\begin{equation}
\begin{aligned}
\label{eq:MILP}
&  \min  
& &   \sum \limits_{t , k } C^t_k  y^t_k + \sum \limits_{t , k,  l \leq K}  \sum \limits_{i, j \in \mN_i}  C^t_{k,i} x^{t}_{k,i,l,j}\\
& \text{s.t.} & &  \forall (t , k, i)\;, \; \; \sum \limits_{l, j\in \mN_i} x^{t-1}_{l,j,k,i} = \sum \limits_{l, j\in \mN_i} x^{t}_{k,i,l,j} & &\texttt{ flow conservation}\; \\
& \text{ } & &  \forall (t , k, i)\;, \; \;\sum \limits_{l, j\in \mN_i} x^{t}_{k,i,l,j} \leq y^t_k & & \texttt{ disjoint paths + selected mode}\; \\
& \text{ } & &  \forall t \;,\; \; \sum \limits_{k} y^{t}_{k} = 1 & & \texttt{ selecting one mode}\;\\
& \text{ } & &  \forall t,k,i,l,j  \;,\; \;  0 \leq x^{t}_{k,i,l,j} \leq 1\;\\
& \text{ } & &  \forall t,k \;,\; \;  y^{t}_{k}  \in \{0,1\}\;
\end{aligned}
\end{equation}

\begin{figure}[ht!]
\begin{center}
\includegraphics[width=0.90\textwidth]{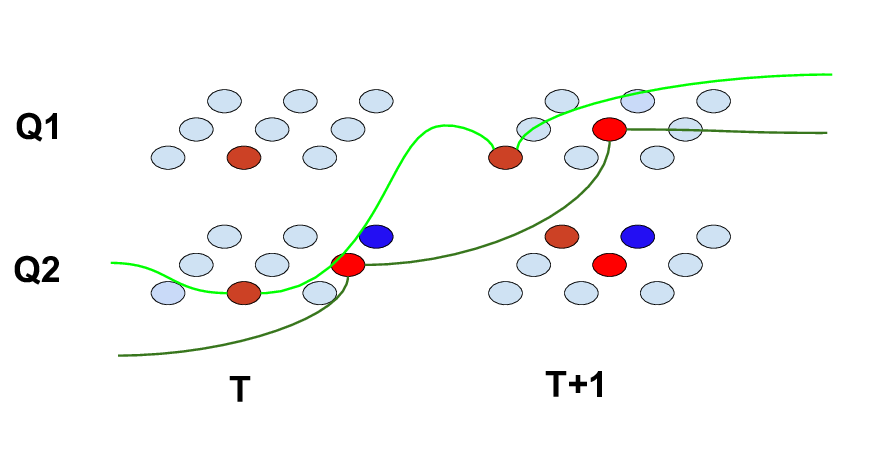} \\
\end{center}
\caption{Illustration of the output of our K-Shortest Path algorithm in the case of multiple modes.}
\label{fig:MMKSP}
\end{figure}

\paragraph{KSP prunning}
However, the problem as written above, may involve several tens millions of flow variables and therefore becomes intractable, even for the best MILP solvers. We therefore first prune the graph to drastically reduce its size.

The obvious strategy would be by thresholding the POMs and removing all the outgoing and incoming edges from locations which have probabilities below $q_{thresh}$. However, this would be self-defeating as one of the main strengths of the KSP formulation is to be very robust to missing-detections and be able to reconstruct a track even if a detection is completely lost for several frames. 

We therefore resort to a different strategy. More precisely, we initially relax the constraint \texttt{ disjoint paths + selected mode}, to a simple disjoint path constraint, and remove the constraint \texttt{ selecting one mode}. We therefore obtain a relaxed problem
\begin{equation}
\begin{aligned}
\label{eq:MILP_relax}
&  \min  
& &   \sum \limits_{t , k }  \sum \limits_{t , k,  l \leq K}  \sum \limits_{i, j \in \mN_i}  C^t_{k,i} x^{t}_{k,i,l,j}\\
& \text{s.t.} & &  \forall (t , k, i)\;, \; \; \sum \limits_{l, j\in \mN_i} x^{t-1}_{l,j,k,i} = \sum \limits_{l, j\in \mN_i} x^{t}_{k,i,l,j} & &\texttt{ flow conservation}\; \\
& \text{ } & &  \forall (t , k, i)\;, \; \;\sum \limits_{l, j\in \mN_i} x^{t}_{k,i,l,j} \leq1 & & \texttt{ disjoint paths}\; \\  
& \text{ } & &  \forall t,k,i,l,j  \;,\; \;  0 \leq x^{t}_{k,i,l,j} \leq 1\;
\end{aligned}
\end{equation}

which is nothing but a vanilla K-Shortest Path Problem. It can be solved using our linear-time KSP algorithm. This KSP problem will output a very large number of paths, going through all the different modes simultaneously. From, this output, we extract the set of grid locations which are used, in any mode, at each time step, and select them as our potential locations in the final program. In our current implementation, we add to these locations, the ones for which $q^t_{k,i} \geq q_{thresh}$ for any mode at time-step $t$. 

We can finally solve Program~\ref{eq:MILP}, where non-selected locations are pruned from the flow graph. We don't know if our strategy, based on a relaxation and pruning, provides a guaranteed optimal solution to~\ref{eq:MILP}, but this is an interesting question.

\section{Pseudo-code for the Multi-Modal Mean-Fields algorithm}

Algorithm~\ref{algo:Split} summarises the operations to split one mode into two, or, in other words, to obtain the two additional constraints which are used to define the two newly created subsets.
Algorithm~\ref{algo:MMMF} summarises the operations to obtain the Multi-Modal Mean Field Distribution by constructing the whole Tree.

In Algorithm~\ref{algo:MMMF}, $ConstraintTree$, is taken to be a Tree in the form of a list of constraints, one for each branching-point, or leaf,---except for the root---, in a breadth first order. The function $ \texttt{pathto}(nNode)$, returns the set of indices corresponding to the branching points on the path to the branching point, or leaf with index  $nNode$, including index $nNode$ itself.

\begin{algorithm}
\label{algo:Split}
\caption{Function:$\texttt{Split}(ConstraintList)$}
\textbf{Input:}\\
$E(\bx)$: An Energy function defined by a CRF;\\
$\texttt{SolveMF}(E,ConstraintList)$: A Mean Field solver with cardinality constraint.;\\
$Temperatures$: A list of temperatures in increasing order;\\
$\mH_{low},\mH_{high}$: Entropy thresholds for the phase transition. 0.3 and 0.6 here.\\
$C$: A cardinality threshold\\
\textbf{Output:} \\
$LeftConstraints$: A triplet containing a list of variables, clamped to value, -C \\
$RightConstraints$: A triplet containing a list of variables, clamped to value, C\\
\begin{algorithmic}
\STATE{$Q^{T_0} \leftarrow \texttt{SolveMF}(E)$}
\FOR{$\texttt{T in }Temperatures$}
	\STATE{$Q^T \leftarrow \texttt{SolveMF}(\dfrac{E}{T},ConstraintList)$}
	\STATE{$i_{list} \leftarrow [.]$}
	\STATE{$v_{list} \leftarrow [.]$}
	\FOR{\texttt{index in $1\dots \texttt{len}(Q^t)$, v in }$labels$}
		\IF{$\mathbbm{1}[\mH(q^{T}_{index}) > 0.6]\mathbbm{1}[\mH(q^{T_0}_{index})<0.3]\mathbbm{1}[q^{T_0}_{index,v} > 0.5] = 1$}
			\STATE{$i_{list}$\texttt{.append(index)},$ v_{list}$\texttt{.append(v)}}
		\ENDIF
	\ENDFOR
	\IF{$\texttt{len}(i_{list}) >0$}
	\STATE{$\texttt{exit for loop}$}
	\ENDIF
\ENDFOR
	\STATE{$LeftConstraints$ = $i_{list}, v_{list},-C$}
	\STATE{$RightConstraints$ = $i_{list}, v_{list},C$}
\RETURN{$LeftConstraints$,$RightConstraints$}
\end{algorithmic}
\end{algorithm}

\begin{algorithm}
\caption{Compute Multi-Modal Mean Field}
\label{algo:MMMF}
\textbf{Input:}\\
$E(\bx)$: An Energy function defined on a CRF;\\
$\texttt{SolveMF}(E,ConstraintList)$: A Mean Field solver with cardinality constraint;\\
$\texttt{Split}(ConstraintList)$: Alg.~\ref{algo:Split}. A function that computes the new constraints. \\
$NModes$: A target for the number of modes in the Multi-Modal Mean Field\\
\textbf{Output:} \\
$Qlist$: A list of Mean Field distributions in the form of a table of marginals \\
$mlist$: A list of probabilities, one for each mode\\
\begin{algorithmic} 
\STATE{$ConstraintTree =[.]$} \\
\text{We first build the tree by adding constraints.} 
\WHILE{$nNode <NModes$}
	\STATE{$ConstraintList =[.]$}
	\FOR{$p  \texttt{ in pathto}(nNode)$}
		\STATE{\texttt{$ConstraintList$.append(ConstraintTree[p])}}
	\ENDFOR
	\STATE{$LeftConstraints,RightConstraints \leftarrow \texttt{Split}(ConstraintList)$}
	\STATE{$ConstraintTree$.\texttt{append}($LeftConstraints$)}
	\STATE{$ConstraintTree$.\texttt{append}($RightConstraints$)}
\ENDWHILE \\
\text{We now turn to the computation of on MF distribution per leaf.}
\STATE{$Qlist=[.],Zlist =[.], mlist =[.]$}
\FOR{mode in $0 \dots NModes$}
	\STATE{$ConstraintList =[.]$}
	\FOR{$p  \texttt{ in pathto}(mode + NModes -1)$}
		\STATE{\texttt{$ConstraintList$.\texttt{append}(ConstraintTree[p])}}
	\ENDFOR
	\STATE Q,Z $\leftarrow$ \texttt{SolveMF}(E,ConstraintList)
	\STATE{$Qlist.\texttt{append}(Q)$}
	\STATE{$Zlist.\texttt{append}(Z)$}
\ENDFOR \\
\text{Finally, we compute the mode probabilities.} 
\FOR{$mode$ in $0 \dots NModes$}
	\STATE{$mlist.\texttt{append}(\dfrac{Zlist[mode]}{\sum Zlist})$}
\ENDFOR
\RETURN{$Qlist$, $mlist$}

\end{algorithmic}
\end{algorithm}

\end{appendices}

\end{document}